\theoremstyle{thmstyleone}%
\newtheorem{theorem}{Theorem}
\theoremstyle{thmstyletwo}%
\theoremstyle{thmstylethree}%
\newtheorem{definition}{Definition}%
\begin{document}
	\title{Dynamic multi-objective lion swarm optimization with multi-strategy fusion: An application in 6R robot trajectory planning}
\author[1]{\fnm{Bao} \sur{Liu}}\email{liubao@upc.edu.cn}

\author[1]{\fnm{Tianbao} \sur{Liu}}\email{cnliutb@163.com}

\author[1]{\fnm{Zhongshuo} \sur{Hu}}\email{whlov112@163.com}

\author[1]{\fnm{Fei} \sur{Ye}}\email{yefei17860707230@163.com}

\author*[2,]{\fnm{Lei} \sur{Gao}}\email{lei.gao@csiro.au}

\affil[1]{\orgdiv{College of control science and engineering}, \orgname{China University of Petroleum (East China)}, \orgaddress{\city{Qingdao}, \postcode{266580}, \country{China}}}

\affil*[2]{\orgdiv{Commonwealth Scientific and Industrial Research Organisation (CSIRO)}, \orgname{Waite Campus}, \orgaddress{\city{Urrbrae}, \postcode{SA 5064}, \country{Australia}}}

\abstract{The advancement of industrialization has spurred the development of innovative swarm intelligence algorithms, with Lion Swarm Optimization (LSO) notable for its robustness, parallelism, simplicity, and efficiency. While LSO excels in single-objective optimization, its multi-objective variants face challenges such as poor initialization, local optima entrapment, and slow adaptation to dynamic environments. This study proposes Dynamic Multi-Objective Lion Swarm Optimization with Multi-strategy Fusion (MF-DMOLSO) to address these limitations. MF-DMOLSO comprises three key components: initialization, swarm position update, and external archive update. The initialization unit employs chaotic mapping for uniform population distribution. The position update unit enhances behavior patterns and step size formulas for cub lions, incorporating crowding degree sorting, Pareto non-dominated sorting, and Levy flight mutation to improve convergence speed and global search capabilities. Reference points guide convergence in higher-dimensional spaces, maintaining population diversity. An adaptive cold-hot start strategy generates an initial population responsive to environmental changes. The external archive update unit re-evaluates solutions based on non-domination and diversity to form the new population. Evaluations on benchmark functions showed MF-DMOLSO outperformed existing algorithms, exceeding 90\% accuracy for two-objective and 97\% for three-objective problems. MF-DMOLSO surpassed multi-objective particle swarm optimization, non-dominated sorting genetic algorithm II, and multi-objective lion swarm optimization. Compared to non-dominated sorting genetic algorithm III, MF-DMOLSO showed a 60\% improvement. Applied to 6R robot trajectory planning, MF-DMOLSO optimized running time and maximum acceleration to 8.3s and 54°/s², achieving a set coverage rate of 70.97\% compared to 2\% by multi-objective particle swarm optimization, thus improving efficiency and reducing mechanical dither.}
\keywords{Lion swarm optimization; Crowding sort; Levy mutation; Reference point mechanism; Adaptive cold and hot start strategy; Dynamic multi-objective optimization}

\maketitle
	\section{Introduction}
The increasing demand for multi-objective optimization algorithms that can deliver optimal solutions with superior convergence rates and high success rates across various domains, including multi-objective decision-making, robot trajectory planning, and neural network parameter training, has led to a surge of interest \cite{yue2021differential}. Algorithms such as multi-objective particle swarm optimization (MOPSO) or non-dominated sorting genetic algorithm II (NSGA-II) have been successfully applied in addressing problems in diverse domains, including robot path planning \cite{mac2017hierarchical}, energy storage \cite{geng2019energy}, mine water management \cite{gao2014systems}, and even military applications \cite{bukar2019optimal}. 

Compared to single-objective optimization methods, multi-objective evolutionary algorithms (MOEAs) typically exhibit higher computational complexities and more stringent convergence criteria. Additionally, many optimization environments change dynamically, further complicating the problem. Aboud et al. \cite{aboud2022dpb} proposed a Dynamic Pareto bi-level MOPSO (DPb-MOPSO) algorithm for dynamic multi-objective problems using distributed architecture-based particle swarm optimization. NSGA-II can optimize the problem with two objectives, but struggles with high-dimensional objective space. Some scholars combined algorithms such as NSGA-II and MOGWO to address practical challenges \cite{rostamzadeh2024multi,cheraghi2023multi}, leveraging the advantages of multiple algorithms to broaden the solution space. However, these improvements remain insufficient in multiple objective spaces. Deb et al. \cite{deb2013evolutionary} proposed a non-dominated sorting genetic algorithm III (NSGA-III), which improved the convergence and distribution of populations by introducing the expression of population diversity based on reference points. Although NSGA-III is the most classical algorithm for high-dimensional target optimization at present, it still faces difficulty in approaching global optimization and loss of evolutionary diversity when dealing with constraint problems and high-dimensional problems \cite{gu2022improved}. Meanwhile, NSGA-III requires the establishment of reference points even for two-dimensional problems, adding complexity and redundancy to the optimization process. Purshouse and Fleming \cite{purshouse2007evolutionary} underscored the importance of activating diversity mechanisms. They demonstrated that the mere existence of diversity mechanisms is insufficient and that the active participation of these mechanisms matters. Maintaining diversity near the Pareto Frontiers (PFs) and achieving convergence are two conflicting goals, while a common genetic operator is insufficient to achieve both simultaneously. Therefore, enhancing the approximation to the true PFs and a well-distributed population remain the common goals for existing MOEAs.

The above issues have hindered the further improvement of MOEAs in dealing with multi-local optimal functions. However, the Lion Swarm Optimization (LSO) algorithm, characterized by evolutionary diversification and a balance between early-stage exploration and later-stage exploitation, may provide a novel approach to further improve MOEAs. LSO proposed by Liu et al. \cite{liu2018swarm} in 2018, is a swarm intelligence algorithm inspired by the foraging behavior of lions in nature. In contrast to the standard MOPSO, which relies on a single individual updating mode and often exhibits poor global searching abilities, LSO adopts a cooperative foraging method observed in lion groups. This approach diversifies the updating of individual positions, enabling rapid algorithm convergence and reducing the likelihood of getting trapped in local optima. 

Additionally, LSO demonstrates reduced reliance on experience-based parameter settings, enhancing its universality. Due to its advantages in terms of fast convergence and well-defined structure, LSO has garnered increasing attention \cite{liu2020lion,huang2024multi}. Although LSO has demonstrated remarkable performance in various fields \cite{qiao2020hybrid}, further efforts are required to explore its potential in multi-objective optimization.

Despite the integration of reverse elite learning into LSO, the majority of individuals still adhere to the principle of learning from the elite. In situations where no individual manages to reach the forefront of actual optimal solutions or if the optimal solution is not part of the reverse elite learning path, there remains a risk of the algorithm converging towards a local optimum. Notably, when dealing with high parameter dimensions and multiple objectives, it becomes challenging to avoid converging towards pseudo-PFs or part of the real PFs. Dong et al. \cite{dong2020joint} introduced a resampling mechanism in the later stages of multi-objective lion swarm optimization to increase the number of individuals in sparse areas, thereby improving the spatial distribution of the population and significantly enhancing global search capabilities. However, during the initial optimization period, they lacked effective guidance for population convergence, leading to a slightly slower convergence speed. Additionally, there is some randomness in updating individual and global optima, and the guiding influence of elites is not as significant as it could be. Ji et al. \cite{ji2021lion} proposed a lion swarm optimization by reinforcement pattern search, which effectively addressed issues such as lions easily going out of bounds when operating in a large range and the non-universality of position update formulas. This approach simultaneously enhanced the algorithm's local search capability. Nevertheless, the algorithm still tended to get trapped in local optimal solutions, indicating that its global search capability remains to be improved. Liu et al. \cite{liu2023adaptive} proposed an adaptive lion swarm optimization algorithm incorporating chaotic search and information entropy to solve the problem that LSO algorithm has a slow convergence speed and tends to fall into the local optimal in subsequent iterations. The accuracy and stability of the algorithm were proved to be excellent, but its advantages were not extended to multi-objective problems. 

To solve the problems mentioned above, this paper proposes the Dynamic Multi-Objective Lion Swarm Optimization with Multi-strategy Fusion (MF-DMOLSO). MF-DMOLSO consists of three main components: initialization unit, swarm position update unit, and external archive update unit. The initialization unit employs chaotic mapping, which exhibits ergodicity and long-term unpredictability, ensuring a more uniform distribution of individuals within the search space and accelerating the discovery of PFs. In the position update unit, the selection strategy combines Pareto dominance relationships with population diversity metrics. Diversity assessment methods are introduced respectively for different dimensions: plane crowding distance for two-dimensional objective and the spatial crowding degree based on reference points for multi-objective. Furthermore, this paper introduces Levy flight mutation, enhancing the population's ability to escape local optima and improving convergence speed. To cope with dynamic targets, adaptive cold and hot start strategies are also introduced to adapt to environmental changes and generate new populations based on these changes and historical information. Additionally, this paper modifies the evolutionary approach for cub lions. The external archive update unit selects the next generation of individuals and archive solutions based on population dominance relationships and diversity, while also identifying the global optimal solution.

The subsequent sections of this paper are organized as follows. Section 2 presents a fundamental description of LSO. Section 3 introduces in detail the enhanced strategies integrated into MF-DMOLSO. Section 4 outlines the experimental methodology and parameter settings. Section 5 presents the performance of MF-DMOLSO on benchmark functions. Section 6 focuses on the piecewise polynomial trajectory planning of a 6R robot and the optimization of trajectory parameters using MF-DMOLSO. Section 7 discusses the advantages and limitations of the proposed algorithm. Finally, Section 8 concludes the research efforts of the entire paper.

\section{Lion swarm optimization}
In recent years, LSO has garnered increasing attention in the field of intelligent optimization algorithms due to its diverse evolutionary mechanism and excellent optimization performance. This algorithm introduces the concepts of gender and age to the population, emulating the internal structure of a lion swarm. This results in distinct evolutionary paths for different individuals, fostering a highly collaborative environment.

Within the pride's territory, the lion king assumes responsibility for protecting the territory, feeding the young lions, and fending off intruding lions. Typically, the largest prey within the territory is claimed by the king, who serves as a proxy for the global historical optimum in the algorithm. Lionesses collaborate to hunt prey. If they encounter a more lucrative target than the current territorial king, they will usurp it. Young lions, or follower lions, primarily interact with adult lions in three ways: following the king to feed, shadowing lionesses to learn hunting techniques, or being expelled from the territory to fend for themselves and eventually challenge the king's status upon returning \cite{liu2018swarm}.

Let the target $f_i(x),\ i=1,2,3\ldots$. The number of lions is N, the maximum number of iterations of the algorithm is T, and the solution of the optimization goal is in a D-dimensional space, namely $x_j=(x_{j1},x_{j2},x_{j3},\ldots x_{jD}), 1\le j\le N. \beta is the adult lion ratio, \beta\in(0,1)$. Since adult lions pay more attention to local optimization, the adult lion ratio is usually set below 0.5, so that there is a larger number of young lions to maintain better population diversity and more diversified evolution ways and improve the global detection ability. The following paper will introduce the evolutionary mechanism of different lion species in detail.

(1) The evolutionary mechanism of the lion king

The lion king represents the current historical optimal value, so the lion king chooses to move in a small range near the current optimal position to explore whether there is a better solution near the current optimal value. Its updating mode is as follows:
\begin{equation}
	X_i^{k+1}=g^k(1+\gamma\|p_i^k-g^k\|)
\end{equation}

(2) Lioness evolution mechanism

Lionesses need to cooperate with another lioness during hunting, which is updated in the following ways:
\begin{equation}
	x_{i}^{k+1}=\big(p_{i}^k+p_c^k\bigr)(1+\alpha_f\gamma)/2
\end{equation}

(3) Mechanisms of lion cub evolution
Young lions have three renewal stages: co-evolution with the king, co-evolution with the lioness, and as adults, being driven out of their territory, being trained and trying to move towards the best position in their memory, 
\begin{equation}
	x_i^{k+1}=\left\{\begin{aligned}
		&(g^k+p_i^k)(1+\alpha_c\gamma)/2,&&q\leq1/3\\
		&(p_m^k+p_i^k)(1+\alpha_c\gamma)/2,&&1/3<q\leq2/3\\
		&(\bar{g}^k+p_i^k)(1+\alpha_c\gamma)/2,&&2/3<q\leq1
		\end{aligned}
	\right.
\end{equation}

where $g^k$ signifies the optimal population position achieved after the $k^{th}$\ iteration. $\gamma $ is a random number generated according to the standard normal distribution. The historical optimal position of the $i^{th}$\ lion after $k$ iterations is denoted by $p_i^k$. Similarly, $p_c^k$\ and $p_m^k$ represent the historical optimal position of a randomly selected lioness after $k$ iterations. $q$ is a uniform random number within the range of $[0,1]$. The position where a lion cub is expelled from the territory is called $\bar{g}^k$, defined as follows:
\begin{equation}
	\bar{g}^k=\overline{\text{low}}+\overline{\text{high}}-g^k
\end{equation}
where $\overline{\text{low}}$ and $\overline{\text{high}}$ respectively represent the minimum and maximum mean values of each dimension of the population optimization space. $\bar{g}^k$ is the position far away from the lion king, which is an elite reverse learning idea. $\alpha$ is the disturbance factor defined by the original algorithm, which can make the activity range of the lion vary from large to small with the number of iterations increasing, and improve the global exploration ability in the early stage and local development ability in the late stage of the algorithm. The disturbance factors of the lioness and the lion cubs are defined as follows:
\begin{gather}
	\alpha_f=step*\exp{\left[-30\left(\dfrac{t}{T}\right)^{10}\right]}\\
	\alpha_c=step*\left(\frac{T-t}{T}\right)\ 
\end{gather}
where the step is a coefficient representing the lion's maximum disturbance range \cite{liu2018swarm}. Many scholars have proved that LSO has excellent robustness and convergence speed in solving single-objective optimization problems, but it was difficult to achieve such effects by using LSO to deal with multi-objective optimization problems.

\section{The proposed MF-DMOLSO}
LSO lacks a mechanism to escape local optima, leading to a significant reduction in individual disturbance range in the later stage. This results in poor global optimization capabilities. When dealing with multi-objective optimization problems, the selection method of individual historical or global optima cannot be as simple as that of LSO. This is a crucial factor impacting the performance of MOEAs. Its architecture is presented in Fig.1. 
\begin{figure}[H]
	\centering
	\includegraphics[scale=0.34]{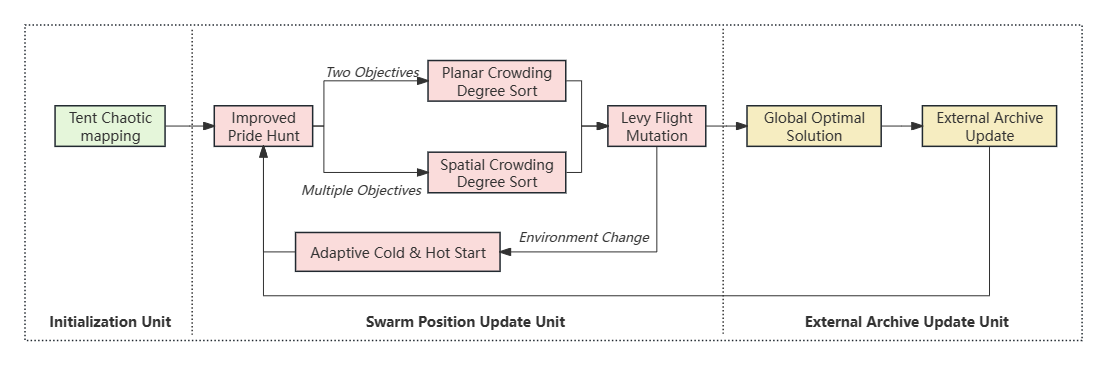}
	\caption{MF-DMOLSO architecture. MF-DMOLSO comprises three units: initialization unit, swarm position update unit, and external archive update unit.}
\end{figure}
\subsection{Initialization unit}

\subsubsection{Tent chaotic mapping}
LSO utilizes a random number generator for population initialization, which may lead to a loss of population diversity \cite{wu2020improved,wu2021lion}, especially when the population size is small. When dealing with complex objective functions and high-dimensional optimization, initialization has a profound impact on subsequent update iterations. To address this, we chose Tent chaotic mapping for population initialization.

Tent chaos exhibits ergodicity, randomness, and long-term unpredictability, characteristics that enhance the global search ability of the population within the search space \cite{zhang2023improved,ming2017multi}. Its mathematical description is as follows:
\begin{equation}
	x_{n+1}=f(x_n)=\left\{
	\begin{aligned}
		&\dfrac{x_n}{\alpha}&x_n\in[0,\alpha)\\
		&\dfrac{(1-x_n)}{(1-\alpha)}&x_n\in[\alpha,1]
	\end{aligned}
	\right.
\end{equation}
where Tent mapping only shows chaotic behavior when $\alpha\in\left(0,1\right)$. To avoid the system entering the periodic state, $\alpha\neq0.5$ and $x_1\neq\alpha$ must be satisfied. Especially when the population is small, random points generated by the rand function often result in missing values in each dimension, leading to significant frequency differences. Conversely, chaotic points generated by Tent mapping exhibit a wider coverage with minimal frequency differences.

\subsubsection{Chaotic mapping initialization}
In practical applications, the parameter thresholds of each dimension varied, necessitating the generation of chaotic sequences for each dimension during initialization. Here is how it could be implemented:
\begin{equation}
	\left\{
	\begin{aligned}
		&x_{n+1}=f(x_n)=\left\{
		\begin{aligned}
			&\dfrac{x_n}{\alpha}&x_n\in[0,\alpha)\\
			&\dfrac{1-x_n}{1-\alpha}&x_n\in[\alpha,1]
		\end{aligned}
		\right.\\
		&X_{n+1}^{dim=i}=U_l^{dim=i}+x_{n+1}\times(U_h^{dim=i}-U_l^{dim=i})
	\end{aligned}
	\right.
\end{equation}
where dim represents a dimension of the parameter and $X_n^{dim=i}$ represents the position of the $n^{th}$ lion in dimension $i$. $U_h^{dim=i}$ and $U_l^{dim=i}$ represent the upper and lower limits of the search space in dimension $i$, respectively.

\subsection{Swarm position update unit}
\subsubsection{Pareto fast non-dominated sort}
For a multi-objective optimization problem $F\left(x\right)=\min(f_1\left(x\right),f_2\left(x\right),\ldots,f_m(x))$ involving $m$ sub-objective functions, let's assume two feasible solutions $x_1$ and $x_2$ that satisfy the constraints and have $F\left(x_1\right)\le F\left(x_2\right)$. If $x_1$ outperforms $x_2$ on at least one subobjective $f_i\left(x\right)(i\in[1,m])$, that is, $f_i\left(x_1\right)<f_i\left(x_2\right)$, then $x_1\succ x_2$. Solutions that are not dominated by any other solutions in the feasible set are referred to as non-inferior or Pareto optimal solutions, and the set containing them is called a non-inferior or Pareto optimal solution set \cite{coello2004handling}. The individuals in the population are assigned Pareto levels. The Pareto-1 level consists of the optimal solution set, and the Pareto-2 level contains the next set of optimal solutions found among the remaining individuals based on the principles outlined above. This process is repeated until each individual is assigned a Pareto level, ensuring that individuals at the same level do not dominate each other. For specific steps, refer to the non-dominated sorting pseudo-code in Table 1.
\begin{table}[htbp]
	\caption{Pareto non-dominated sorting algorithm}
	\begin{tabular}{l}
		\hline
		\textbf{Pseudo-code: Pareto non-dominated sorting}\\
		\hline
		01: flag=1,\ input $X=\left\{x_1,x_2,\ldots,x_n\right\}$,\\ $F\left(x\right)=\left\{f_1\left(x\right),f_2\left(x\right),\ldots,f_m\left(x\right)\right\}$\\
		02: while(The set X is non-empty)\\
		03: \;\;for $i=1$ to the last individual in $X$\\
		04: \;\;\;\;if $x_i$ isn’t dominated by any other individual $x$ in set $X$\\
		05: \;\;\;\;\;\;Pareto level of $x_i=flag$\\
		06: \;\;end for\\
		07: \;\;Delete the individuals whose Pareto level is flag in $X$ set\\
		08: \;\;flag=flag+1\\
		09: end while\\
		\hline
	\end{tabular}
\end{table}

\subsubsection{Adaptive cold and hot start strategies}
Cold start refers to the way to re-initialize the population with the dynamic change of the environment of the optimization problem, which is not related to the optimization result of the previous time, but only related to the value range of the decision space. Hot start is a way to carry out corresponding evolutionary operations (such as mutation) based on the optimization results of the previous time to produce the initial evolutionary population of the next optimization time, which is not only related to the optimization results of the previous time but also related to the value range of the decision space.

In the dynamic change of environment, there are several cases of strong correlation, weak correlation, and no correlation in the Pareto optimal front before and after the time. If there is a strong correlation between the dynamic changes before and after the environment, hot start can make better use of the previously obtained optimization results and re-initialize the population near the original PF. If the correlation is weak, the population initialization needs to be done more widely in the original search space, so as not to make the search space not sufficiently traversed. Therefore, to better adapt to various uncertain dynamic environments, the advantages of cold and hot starting methods are considered when generating a new initial population, and the combination of cold and hot starting is adopted \cite{huan2015mixture}. 
\begin{figure}[H]
	\centering
	\includegraphics[scale=0.16]{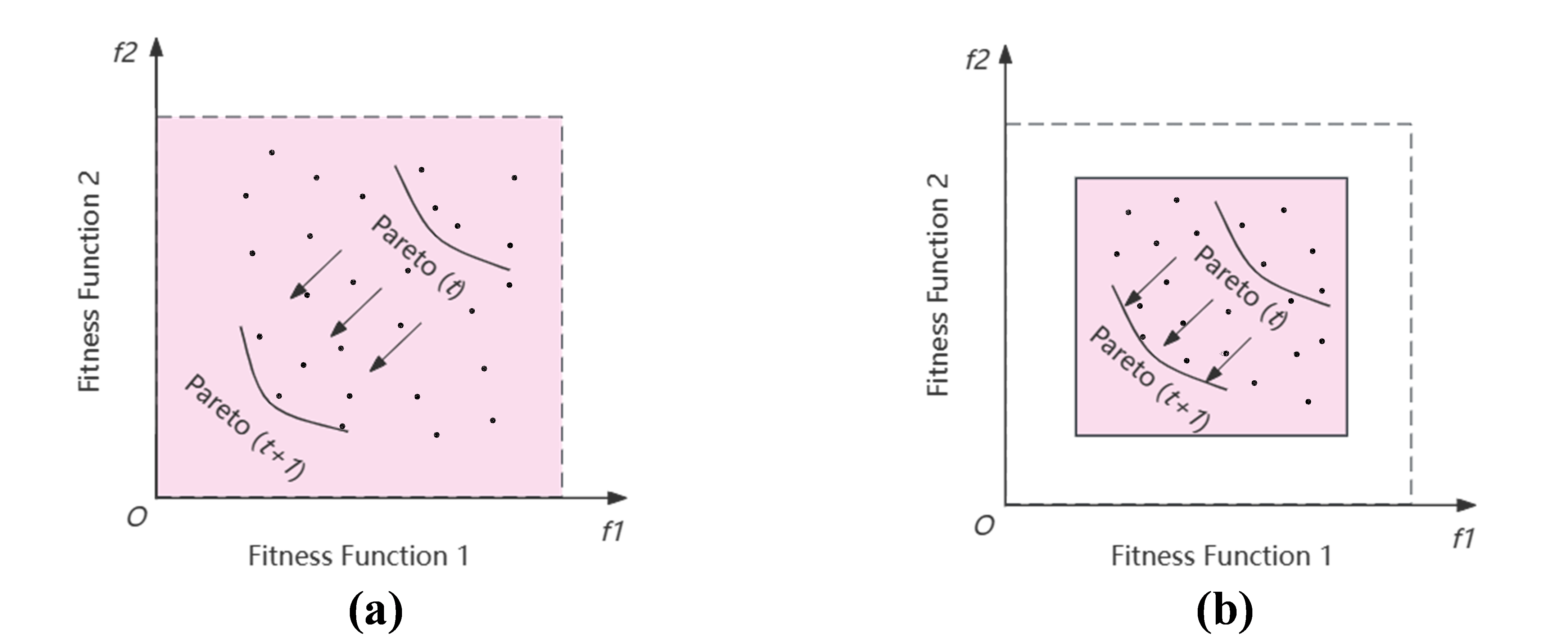}
	\caption{Two start-up methods: (a) weak association – cold start mode, and (b) strong association – hot start mode.}
\end{figure}

In addition, to effectively improve the diversity of individuals and enhance the intensity of evolutionary disturbance under hot start, Cauchy variation will be used to enhance the diversity of the initial population. After the population initialization of cold start and hot start respectively, the two populations are merged, and the non-dominated ranking is carried out in this new population to select individuals who are  better in the new environment until the population is full. The population composed of these better individuals is the result of the adaptive start strategy.

\subsubsection{Improved cub behavior and step formula}
As the most flexible individual, lion cubs are partly responsible for local optimization and partly for global optimization. LSO has a fixed number of lion cubs for different tasks, which lacks flexibility. In this paper, a regulatory factor  is introduced to improve the flexibility of lion cubs' behavior. The improved positional update formula is as follows:
\begin{equation}
	x_i^{k+1}=\left\{\begin{aligned}
		&(g^k+p_i^k)(1+\alpha_c\gamma)/2,&&q\leq\dfrac{\eta}{2}\\
		&(p_m^k+p_i^k)(1+\alpha_c\gamma)/2,&&\dfrac{\eta}{2}<q\leq\eta\\
		&(\bar{g}^k+p_i^k)(1+\alpha_c\gamma)/2,&&\eta<q\leq1
	\end{aligned}
	\right.
\end{equation}
where  $\eta=\dfrac{2t+T}{3T}$, it changes dynamically over time. In the early phase of the algorithm, a larger search range is essential to enable the lion cubs to conduct a more extensive global search. This entails increasing the number of lion cubs implementing the reverse learning strategy and broadening the search scope, thereby enabling the population to quickly identify the actual PFs. Subsequently, in the later phase of the algorithm, the lion cubs focus on more local searches to refine their exploration of the complete PFs and achieve a more precise convergence. This approach enhances both the efficiency and robustness of the algorithm.

The step length formula for lion cubs in LSO is presented in Equation (6). However, the step size decline rate of this formula is fixed, resulting in a large step size in the middle and late stages. This approach often leads to insufficient precision and individuals being close to, but unable to reach, the global optimal. To address this issue, we proposed an enhanced step length formula for lion cubs to improve their optimization capabilities:
\begin{equation}
	\alpha_c=\left\{
	\begin{aligned}
		&step*\dfrac{T-t}{T},&0\leq r\leq 0.7\\
		&step*\exp\left[-30\left(\dfrac{t}{T}\right)^{10}\right],&0.7<r\leq1
	\end{aligned}
	\right.
\end{equation}
where, $r$ is the number generated randomly in [0,1], when $0< r\leq0.7$, the original step length is used to search, when $0.7<r\le1$, the same step length as the lioness is used to search, to ensure that some lion cubs’ step length declines in the early to search sufficiently, and to prevent excessive step length in the late for more accurate search.

\subsubsection{ Levy flight mutation based on crowding degree sorting}
LSO or NSGA-II \cite{deb2002fast} tends to converge prematurely during the optimization process, making it difficult to escape local optimal solutions. Consequently, a significant number of individuals cluster around these local optima. To address this issue, this paper introduces a crowding degree sorting operation. Initially, the population is sorted based on the values of a specific objective function. The crowding distance $C$ for each individual is then determined by calculating the Euclidean distance between two adjacent individuals in the D-dimensional search space, as defined below:
\begin{equation}
	C_i=\|X_{i+1}-x_{i-1}\|_{2}={\sqrt{\sum_{j=1}^{\mathrm{D}}(x_{i+1,j}-x_{i-1,\mathrm{j}})^{2}}}
\end{equation}

Levy flight, a random search strategy, involves both long and short-distance movements—a characteristic that aligns with the foraging habits of fruit flies and other organisms. Since the motion step size follows the Levy distribution, it can make individuals jump out of the local optimal with a greater probability and is therefore widely used in the improvement of optimization algorithms \cite{guan2019modified}. For instance, Du et al. \cite{du2017adaptive} introduced Levy flight to enhance the traditional particle swarm optimization algorithm.

Incorporating an elite learning strategy based on Levy flight, this paper enables individuals to traverse long or short distances towards external solution repositories, thereby expediting convergence, as per formula (12).
\begin{equation}
	x_i^{k+1}=\left\{
	\begin{aligned}
		&\lambda|L|(g^k-x_i^k)+x_i^k,&r\leq0.7\\
		&\lambda|L|(x_j-x_i^k)+x_i^k,&r>0.7,\; x_j\in \{X|X \; \text{is\ in F(1) of external archive}\}
	\end{aligned}
	\right.
\end{equation}
In the above equation, $r$ is a random number,
\begin{equation}
	L=\dfrac{\mu}{|v|^{\dfrac{1}{\beta }}}
\end{equation}

Among them, $\mu\sim N\left(0,\sigma_\mu^2\right)$, $v\sim N\left(0,\sigma_v^2\right)$, where $\sigma_v=1, \;\sigma_\mu $ are described mathematically as follows: 
\begin{equation}
	\sigma_\mu=\left\{\dfrac{\Gamma(1+\beta)\sin(\dfrac{\beta\pi}{2})}{\Gamma(\dfrac{1+\beta}{2})2^{\frac{\beta-1}{2}}\beta^{}}\right\}^{\dfrac{1}{\beta}}
\end{equation}
where, $\beta$ generally takes a value of 1.5.

The smaller the crowding distance $C$, the higher the crowding degree. When the crowding distance falls below a certain threshold, a lion swarm individual can be identified as a "clustering" phenomenon within a specific section of the Pareto frontier. This threshold takes the average of the crowding of all individuals, so the determination of crowding depends on the real-time situation of each iteration. In this study, a select group of hunting lions are dispatched from crowded areas to execute Levy flight mutations. First, make sure that individuals in crowded areas do not dominate the global optimal solution or other solutions in F(1) of the external archive. Then individuals in crowded areas are selected at intervals to perform Levy flight mutation. This approach ensures that local development continues within the region while enabling some individuals to explore globally, thereby enhancing population hunting diversity.

Furthermore, distinct flight step ratio coefficients $\lambda$ are assigned to individuals at different Pareto levels. Lower Pareto levels, indicating individuals farther from the actual Pareto frontier, are assigned higher step ratios, enabling a swifter approach to the optimal frontier. The flight directions also vary: some lions target the global optimal position, while others randomly select solutions from the optimal solution set as their flight direction.

\subsubsection{Population selection strategies based on reference points for multiple objectives}
Because non-dominated solutions occupy most of the population, it is difficult for any elite-protected MOEA to accommodate a sufficient number of new solutions in the population. When the proportion of non-dominated solutions increases exponentially with the increase of the target number, the search process is greatly slowed down. Secondly, implementing diversity preservation, such as crowding, would be a computationally time-consuming operation, and the crowding distance method does not perform well in balancing the diversity and convergence of algorithms. This is also a common problem of MOEAs. Therefore, in the face of multi-objective or many-objective, this paper introduces the reference point mechanism to improve the performance of MF-DMOLSO.

To solve the above problems, Deb et al. [9] first proposed NSGA-III. As an improved algorithm of NSGA-II, NSGA-III is a decomposition-based algorithm, which includes a niche based on reference points and can adaptively build a spatial hyperplane to make the population converge in a better direction. In this paper, the reference point mechanism is introduced into MF-DMOLSO to improve the global convergence speed of the algorithm in the high-dimensional target space, avoid the performance degradation caused by the small Pareto selection pressure, and maintain the uniformity of the solution distribution in the high-dimensional target space.

After generating an offspring population ($N$ individuals) from the parent population ($N$ individuals), we merge them into a combined population $S(t)$ (2N individuals) for non-dominated sorting. Starting from the domination layer with Pareto-1 level, individuals are iteratively added to the next generation population $S(t+1)$ until it reaches the size of $N$. If the $N^{th}$ individual added to $S(t+1)$ belongs to a non-domination layer F(l) where there are remaining individuals who cannot be accommodated, the reference point mechanism is employed to reselect the most suitable individuals from F(l) for inclusion in $S(t+1)$. Here's the technical procedure to accomplish the task \cite{geng2020improved}:

(1) Reference point generation is based on Das and Dennis' method, which is the most commonly used systematic method for evenly distributing reference point sampling over the unit simplex, in the following form:
\begin{equation}
	S_j\in\left\{\dfrac{0}{H},\dfrac{1}{H}\ldots,\dfrac{H}{H}\right\},\, \sum_{j=1}^{M}{S_j=1}
\end{equation}
where $H$ is the number of target shards and M is the number of targets.

(2) Performing adaptive normalization of the population. Firstly, the ideal point of the population $S_{grade}^t$ is defined as the minimum value $z_i^{\min},i=1,\ 2,\ldots,\ M$, across all historical populations ${U_{\tau|=0}^{t}S}_{grade}^\tau$. This constructs the ideal point $\bar{z}=(z_1^{\min},z_2^{\min},\ldots,z_M^{\min})$. Based on this ideal point, the objective functions are transformed as
\begin{equation}
	f_i'\left(x\right)=f_i\left(x\right)-z_i^{\min}
\end{equation}
Secondly, the extreme points corresponding to each coordinate axis are determined by evaluating
\begin{equation}
	ASF\left(x,w\right)=\max_{i=1}^{M_{grade}}\dfrac{f_i'(x)}{w_i},\ x\in S_{grade}^t
\end{equation}

The intercepts $a_i,\ i=1,\ 2,\ldots,\ M$, which are ultimately sought in our solution, are represented by the intersections between the hyperplane formed by M extreme points and the coordinate axes in the M-dimensional space. Subsequently, the objective function values are adaptively normalized according to the following formula
\begin{equation}
	f_i^n\left(x\right)=\dfrac{f_i'\left(x\right)}{\left(a_i-z_i^{\min}\right)}=\dfrac{\left(f_i\left(x\right)-z_i^{\min}\right)}{\left(a_i-z_i^{\min}\right)}
\end{equation}

(3) Individual association and retention. The reference line $d\left(s_{ref},w_{ref}\right)$ of all individual $s_{ref}$ from the reference point $w_{ref}$ is found, and the nearest reference point $\pi\left(s_{ref}\right)$ and its corresponding distance $d\left(s_{ref}\right)$ are obtained. The reference point $\pi\left(s_{ref}\right)$ closest to the population individual is considered to be associated with the individual.
\begin{gather}
	d\left(s_{ref},w_{ref}\right)=\ \dfrac{\left(s_{ref}-{w_{ref}}^T\cdot s_{ref}\cdot w_{ref}\right)}{\left|w_{ref}\right|}\\
	\pi\left(s_{ref}\right)=w_{ref}: argmin\ d\left(s_{ref},w_{ref}\right)\\
	d\left(s_{ref}\right)=\ d\left(s_{ref},\pi\left(s_{ref}\right)\right)
\end{gather}

There are two situations: one is that a reference point is associated with one or more individuals; second, there are no individuals associated with it. As specifically shown in Fig.3.
\begin{figure}[H]
	\centering
	\includegraphics[scale=0.1]{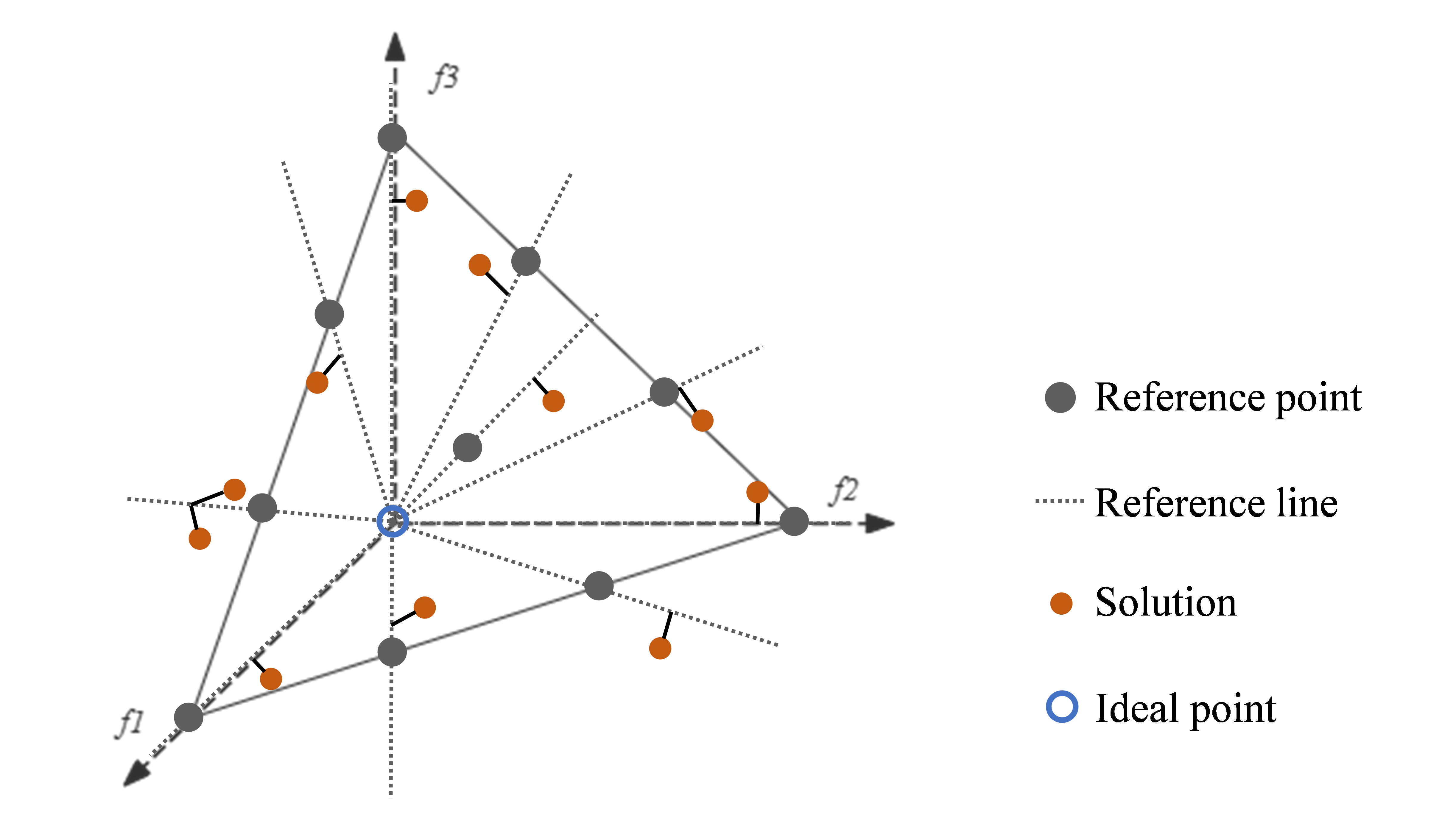}
	\caption{Reference point mechanism. We generally show how population individuals relate to reference points.}
\end{figure}

The principle in the algorithm is that individuals corresponding to less connected reference points should be retained to maintain diversity. Therefore, the reference point with the least number of associations is first selected. If there are multiple such reference points, one $\overline{j}$ is randomly selected from them, and $I_{\overline{j}}$ is the individual associated with the reference point $ \overline{j}$. If $I_{\overline{j}}$ is empty, the reference point is reselected. Otherwise, it depends on whether $\rho_{\overline{j}}$ is 0. $\rho_{\overline{j}}$ is the number of individuals associated with the reference point $\overline{j}$ in all non-dominated layers before the F(l) layer. If $\rho_{\overline{j}}=0$, the individual closest to $\overline{j}$ is selected from $I_{\overline{j}}$ to enter the next generation. Otherwise, a random individual from $I_{\overline{j}}$ is selected to enter the next generation. This operation is repeated until the size of $S(t+1)$ is $N$.

\subsection{External archive update unit}
\subsubsection{Optimal location selection mechanism}
The globally optimal individual significantly influences the flight mutation and updating of the entire lion swarm, serving as a crucial factor in enhancing the overall algorithm performance. Typically, the globally optimal individual steers the population towards the true Pareto front or distributes diversity along the Pareto front. When the Pareto selection pressure is insufficient, the convergence of the selected individuals may not be excellent enough to guide the population convergence and hinder the effective convergence process. On the other hand, poor diversity can easily result in premature convergence of the population, leading to the loss of diversity. Therefore, when devising strategies for selecting the globally optimal individual, it is essential to strike a balance between convergence and distribution, thereby effectively promoting the evolutionary process.

For a two-objective problem, when the solutions on PF are tightly clustered, the sparse distribution regions require a larger number of lion individuals to effectively search for optimal development. Therefore, we select the least crowded point on layer F(1) of the external file as the global optimal position for guiding the evolution of individuals in subsequent iterations. 
\begin{figure}[H]
	\centering
	\includegraphics[scale=0.12]{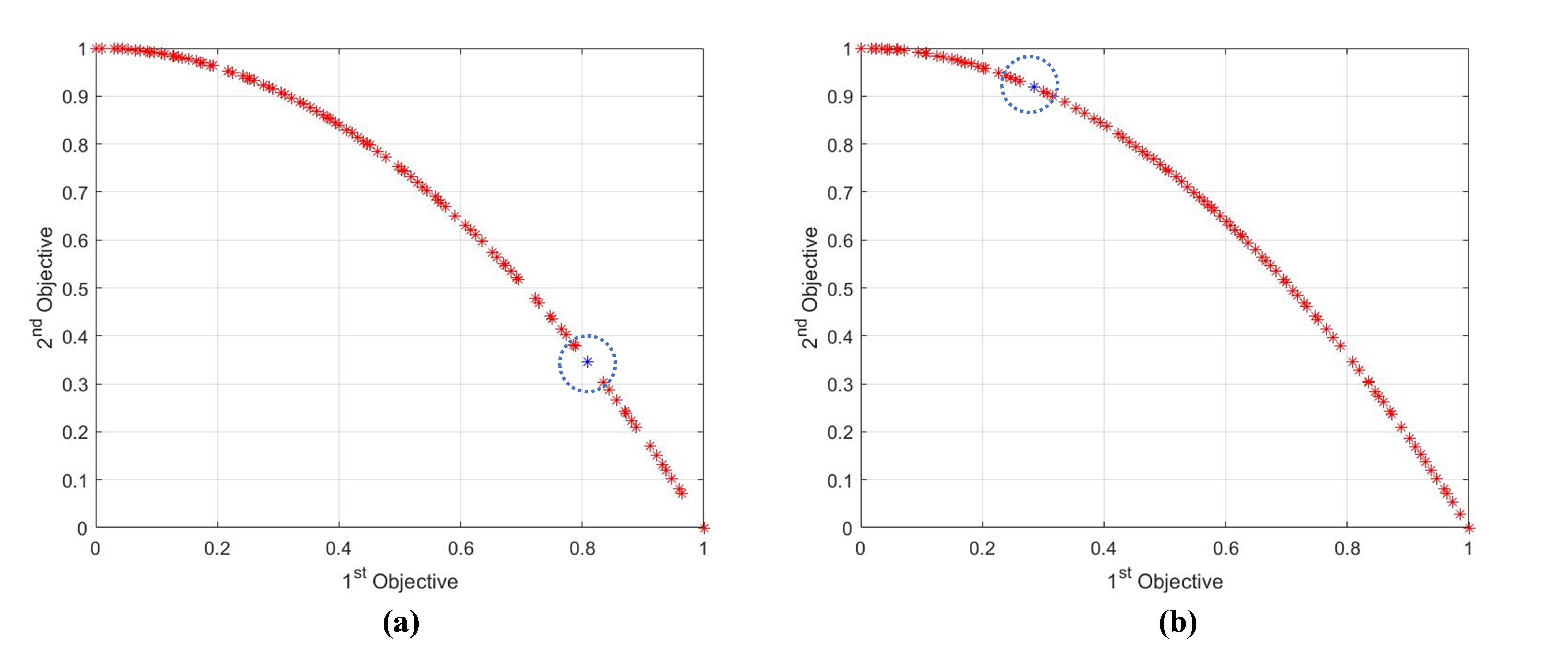}
	\caption{Optimal location selection and external archive update results based on congestion degree: (a) Optimization result of a certain iteration, and (b) Optimization result after a few more iterations. The dot circled represented the global optimal position that lay within the sparsely distributed region of the PF's optimal solution.}
\end{figure}

As seen in Fig.4 (a) and (b), after several iterations, the solutions in the original sparse interval had become more abundant, while the regions of dense solution concentration had become relatively sparse.

For global optimization in three-dimensional and higher spaces, this paper proposes a global optimal selection strategy based on the crowding degree of associated points. Since the same reference line may be associated with multiple individuals, a higher number indicates a greater density of individuals near that reference line. Therefore, the number of individuals associated with a reference line is used as a diversity metric, denoted as the crowding degree $\varphi_j (j =1,2,\ldots, Q$, where Q is the index of the reference line). A higher value of $\varphi_j$ indicates a denser distribution of individuals near reference line j and poorer diversity. When two non-dominated solutions are associated with the same reference line, they are considered to have the same diversity. To further evaluate the convergence of non-dominated solutions, the distance from a non-dominated solution x to the origin is defined as a convergence metric, calculated as follows:
\begin{equation}
	\sigma_x=\sqrt{\sum_{i=1}^{M}{f_i^n\left(x\right)}^2}
\end{equation}

A smaller $\sigma$ value indicates that a non-dominated solution is closer to the origin, thereby indicating better convergence of the solution. Therefore, near the reference line with the least number of associated individuals, that is, among the non-dominated solution with the least $\varphi_j$, the solution with the smallest $\sigma$ value is selected as the global guide. 

So far, we define different crowding representations in different dimensional spaces. The global optimal solution can be selected. Meanwhile, Levy flight mutation based on crowding degree is still applicable in the high-dimensional objective space. These ensure that in guiding the next round of evolution, diversity is maintained while the acceleration of the optimization process is achieved more efficiently through elitism.

\subsubsection{External archive update mechanism}
In the solution space of multi-objective problems, there is typically more than one optimal solution. Therefore, it is essential to establish an external archive dedicated to storing exceptional solutions and updating it after each population update \cite{deb2002fast, dong2021research}. The algorithm maintains external archive by crowding degrees in different spatial dimensions. Refer to the accompanying pseudo-code for this process.
\begin{table}[htbp]
	\caption{External archive updating mechanism}
	\begin{tabular}{l}
		\hline
		\textbf{Pseudo-code: External archive updating based on crowding degree}\\
		\hline
		01: $X_{new}=X\cup X_{old}$ \\
		02: Pareto sorting in the $X_{new}$, non-dominant layers are divided into $F(1) - F(M)$\\
		03: Crowding degree sorting in the $X_{new}$\\
		04: while(number of $X_{new}$ $>$ upper limit of external archive)\\
		05: \;\;Delete the most ‘crowded’ solutions starting at the $F(M)$ layer\\
		06: end while\\
		07: Note: $X,X_{new},X_{old}$ represent the current generation solution, new and old solution set in external archive\\
		\hline
	\end{tabular}
\end{table}

The new population solutions are added to the external archive created for this study, combined into a new collection $X_{new}$. The solutions in $X_{new}$ are reordered according to Pareto rules, and the crowding degree of all solutions is evaluated. Since the number of $X_{new}$ exceeds the upper limit set by the external file, we delete the solution from the lowest level of Pareto level, and delete the most crowded individual first until the size of $X_{new}$ meets the requirements. The usage of crowding degree metrics to manage PF helps to eliminate redundant solutions and maintain the diversity of solutions \cite{sivasubramani2011multi}. 

\subsection{Working process of MF-DMOLSO}
In summary, MF-DMOLSO proceeds as follows: After the initial chaotic mapping, the population's position is updated. Subsequently, the fitness and crowding degree of each individual is calculated. If the crowding degree is high, a Levy flight mutation is applied to a subset of lions. Then the current population and the solutions stored in the archive are formed into a new set, in which better individuals are selected to form the next-generation population or new archive through non-dominant relationships and diversity assessment. When the external environment shifts, the adaptive cold and hot start strategy is employed to re-initialize the population. The specific process is shown in Fig.5.
\begin{figure}[H]
	\centering
	\includegraphics[scale=0.53]{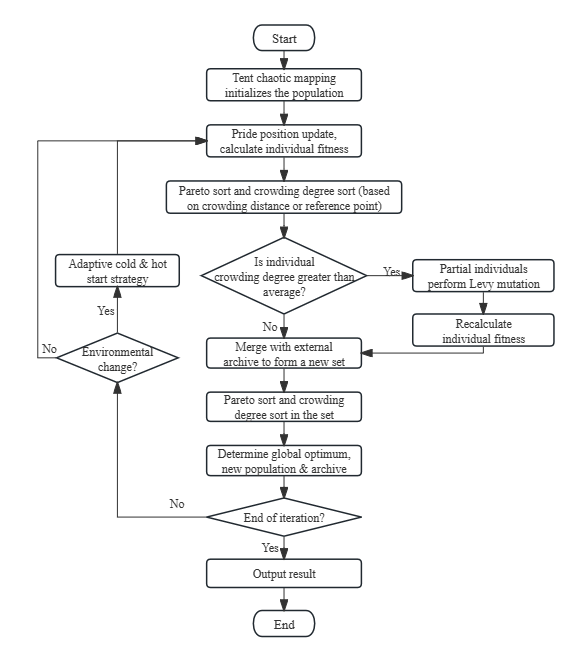}
	\caption{Work process of MF-DMOLSO. Here we show the full flow of the algorithm. It is worth noting that whether the crowding degree sort operation uses the crowding distance sort or the sort based on reference points depends on the target space dimension. Crowded distance sort is used only in two target spaces, and in other cases another way is used.}
\end{figure}
\subsection{Convergence analysis of MF-DMOLSO}
The population sequence ${s(k),k\geq0}$ generated by LSO algorithm is a finite homogeneous Markov chain, where k is the number of iterations, which has been proved in the literature [12], that is to say, the state transition of any lion is only related to the optimal position $g$ of the group, the individual historical optimal position $p_{c}$ of the lioness's partner lion, and the individual optimal position $p_m$ of the lion mother, but not to $k$. And since LSO and multi-objective lion swarm optimization (MOLSO) have the same way of individual evolution, MOLSO also has the above properties. This paper took the evolution of lioness as an example to prove the convergence of MF-DMOLSO algorithm.

\begin{definition}
	If the position of the lion at time k is x(k) and N is any position in the search space, then the position convergence of the particle is defined as follows:
	\begin{equation}
		\lim\limits_{k\to\infty}x(k)=M
	\end{equation}
\end{definition}
\begin{theorem}
	If $\gamma$ satisfies the standard normal distribution N(0,1), then MOLSO method must converge, where, $\gamma$ is the random number in equations (1), (2), (3).
\end{theorem}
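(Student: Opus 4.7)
The plan is to combine the cited fact that $\{s(k)\}$ is a finite homogeneous Markov chain with the standard Gaussian tails of $\gamma$ to argue that the set of optimal positions is absorbing and almost surely reached. Following the theorem statement, I would take the lioness recursion $x_i^{k+1} = (p_i^k + p_c^k)(1+\alpha_f\gamma)/2$ as the representative mode, since the lion king update (equation (1)) and the cub update (equation (3)) share the same skeleton of ``current anchor plus $\gamma$-perturbation''.

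First I would re-verify the Markov property at the level of positional dynamics. Given $s(k)$, the quantities $p_i^k$ and $p_c^k$ are measurable functions of the present state, $\alpha_f$ is a deterministic function of iteration count, and $\gamma$ is drawn fresh and independent of the past. Hence the conditional law of $x_i^{k+1}$ given the full history collapses to a function of $s(k)$ alone, which is what homogeneity of the chain requires. Second, I would invoke the elite-preservation rules to show that the Pareto-optimal set $\mathcal{O}$ is absorbing: once an individual-best or global-best record enters $\mathcal{O}$, it is never overwritten, so the recorded sequence of best fitness values is monotone and, being bounded on the feasible domain, admits a limit $M$ in the sense of Definition 1.

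Third, and this is the probabilistic core, I would establish accessibility using the Gaussian assumption on $\gamma$. For any open neighborhood $U$ of $M$ and any state with $(p_i^k + p_c^k) \neq 0$, the pushforward of $N(0,1)$ under the affine map $\gamma \mapsto (p_i^k + p_c^k)(1+\alpha_f\gamma)/2$ assigns $U$ strictly positive mass whenever the ray spanned by the midpoint meets $U$. A Borel--Cantelli-type argument across infinitely many independent draws of $\gamma$ then forces the chain to visit every such $U$ infinitely often, and combining this with the absorbing property of $\mathcal{O}$ yields $\lim_{k\to\infty} x(k) = M$ almost surely.

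The main obstacle will be the degenerate-direction problem created by the multiplicative structure of the perturbation: if $(p_i^k + p_c^k)$ lies on a subspace that does not pass through $M$, the lioness update by itself cannot steer toward $M$. I would resolve this by appealing to the multi-species structure of MOLSO: the cub update carries the reverse-learning term $\bar g^k = \overline{\text{low}} + \overline{\text{high}} - g^k$, which contributes a translation independent of the current midpoint, and the lion king update centres a Gaussian cloud directly on $g^k$. Because the elite record is fed by whichever species produces the improvement at each iteration, the union of species-level reachable sets covers the search domain, irreducibility on the complement of $\mathcal{O}$ is restored, and the Borel--Cantelli step closes.
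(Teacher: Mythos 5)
Your route is genuinely different from the paper's, and it contains a gap that the paper's own argument sidesteps by proving much less. The paper does not attempt a Markov-chain/accessibility argument at all: it unrolls the lioness recursion $x_i^{k+1}=(p_i^k+p_c^k)(1+\alpha_k\gamma)/2$ back to the first generation, obtaining a product $(1+\alpha_k\gamma)(1+\alpha_{k-1}\gamma)\cdots(1+\alpha_1\gamma)$ multiplying an average of $2^k$ first-generation historical bests, bounds this by $(1+\mathrm{step}\cdot\gamma)^k\,p_i^1$ using $\alpha\le\mathrm{step}$, asserts $\lim_{k\to\infty}(1+\mathrm{step}\cdot\gamma)^k=1$, and concludes $\lim_{k\to\infty}x_i^k\le p_i^1$ --- i.e.\ only that the limiting position is bounded by, and remains within the range of, the initial population. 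Nothing about almost surely reaching the Pareto-optimal set is claimed or proved there, so your proposal is aiming at a strictly stronger statement than the one the paper establishes.

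The gap is in your accessibility step. Every update mode in LSO/MOLSO --- king, lioness, and all three cub branches, including the reverse-learning one --- has the form $(\text{anchor})(1+\alpha\gamma)$ or $(\text{anchor})(1+\alpha\gamma)/2$: a \emph{multiplicative} perturbation of an anchor point. The one-step transition kernel from a given state is therefore supported on the scaled image of that anchor (a line through the origin and the anchor when $\gamma$ is scalar), not on an arbitrary open neighborhood $U$ of $M$. Your proposed repair --- that $\bar g^k=\overline{\text{low}}+\overline{\text{high}}-g^k$ ``contributes a translation independent of the current midpoint'' --- does not work: the cub update is $(\bar g^k+p_i^k)(1+\alpha_c\gamma)/2$, so $\bar g^k$ merely relocates the anchor, and the reachable set is still the scaled image of $(\bar g^k+p_i^k)/2$. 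The union over species of these supports remains a badly non-surjective (generically measure-zero) subset of the search space, so irreducibility on the complement of $\mathcal{O}$ is not restored and the Borel--Cantelli step cannot close; you would also need a uniform positive lower bound on the hitting probability of $U$ over all visited states, which you do not supply. Separately, even granting accessibility, ``visits every neighborhood of $M$ infinitely often'' yields convergence of the record $g^k$, not of the position sequence $x(k)$ demanded by Definition 1; forcing the positions themselves to settle requires the decay of the disturbance factor $\alpha_f=\mathrm{step}\cdot\exp[-30(t/T)^{10}]\to0$, which your argument never invokes but which is precisely the ingredient (via $\alpha\le\mathrm{step}$ and the product of $(1+\alpha_j\gamma)$ factors) on which the paper's proof leans.
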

\begin{proof}
	In this paper, three kinds of position updating methods were given according to different characteristics of the lion swarm. The updating mode of lioness was selected for convergence verification.
	
	According to Formula (2), the update of the lioness's position is related to the historical optimal position of this lioness and the randomly selected partner lion, and has nothing to do with the time and the position at this time. The historical best location is updated to the current location only if the current location is better than the historical best location. Therefore, the new historical optimal location must be generated from the previously finalized historical optimal location. The details are as follows,
	
	\begin{flalign}
		\begin{aligned}
		x_i^{k+1}&=(1+\alpha_k\gamma)\dfrac{p_i^k+p_c^k}{2}\\
		&=(1+\alpha_k\gamma)\left[(1+\alpha_{k-1}\gamma)\dfrac{p_i^{k-1}+p_m^{k-1}}{2}+(1+\alpha_{k-1}\gamma)\dfrac{p_c^{k-1}+p_n^{k-1}}{2}\right]\dfrac{1}{2}\\
		&=(1+\alpha_k\gamma)(1+\alpha_{k-1}\gamma)\dfrac{(p_i^{k-1}+p_m^{k-1}+p_c^{k-1}+p_n^{k-1})}{2^2}\\
		&\cdots\cdots\\
		&=(1+\alpha_k\gamma)(1+\alpha_{k-1}\gamma)\cdots(1+\alpha_1\gamma)\dfrac{(p_i^{1}+p_m^{1}+\cdots+p_o^{1}+p_q^{1})}{2^k}\\
		\end{aligned}
	\end{flalign}
	where each item in $p_i^1+p_m^1+\ldots+p_o^1+p_q^1$ represents the best position of the first generation per lioness, $2^k$ in total. This historical optimal location may have some duplications, suggesting that this lioness's primary historical optimal location was selected multiple times for location updates. Let the largest item of the $2^k$ items be $p_i^1$. Because it satisfies $\alpha\le \text{step}$, equation (17) satisfies the following inequality,
	\begin{equation}
	\text{Primitive formula}\le\left(1+\text{step}\ast\gamma\right)^k*2^k*p_i^1/2^k=\left(1+\text{step}*\gamma\right)^k* p_i^1
	\end{equation}
\end{proof}
Since $\gamma$ follows the standard normal distribution, when the number of iterations $k$ is sufficiently large, the following formula is valid,
\begin{gather}
	\lim_{k\to\infty}{\left(1+\text{step}*\gamma\right)^k}=1\ \ 
\end{gather}
Thus obtained,
\begin{equation}
	\lim_{k\to\infty} x_i^k\le p_i^1
\end{equation}
Since $p_i^1$ is a bounded constant, this method is convergent.

At the same time, we could find that the final convergence position of the lion swarm algorithm must be within the range of the initial population position, so it could be seen that the uniform distribution of the population in the search space is very important during the population initialization.

\begin{theorem}
	If\;$0<\lambda|L|<2$ , then MF-DMOLSO methods must converge, where $\lambda$ is the step scale coefficient, and L is Levy's random flight length.
\end{theorem}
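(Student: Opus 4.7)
The plan is to follow the template of Theorem 1 and specialize it to the Levy flight mutation update (12). First, I would isolate one branch of (12), say $x_i^{k+1}=\lambda|L|(g^k-x_i^k)+x_i^k$, and rewrite it as the affine recurrence $x_i^{k+1}=(1-\lambda|L|)\,x_i^k+\lambda|L|\,g^k$; the other branch has the identical form with $g^k$ replaced by an F(1) archive member $x_j$. Let $y^k\in\{g^k\}\cup\{\text{F(1) members}\}$ denote the common ``target'' and set $c_k=\lambda|L|$ at iteration $k$. The hypothesis $0<\lambda|L|<2$ is then equivalent to $|1-c_k|<1$, which is the crucial contraction property I will exploit.

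Next I would unroll the recurrence by repeated back-substitution to obtain
\begin{equation*}
x_i^{k+1}=\Bigl[\prod_{j=1}^{k}(1-c_j)\Bigr]x_i^{0}+\sum_{j=1}^{k}c_j\Bigl[\prod_{l=j+1}^{k}(1-c_l)\Bigr]y^{j-1}.
\end{equation*}
Because $|1-c_j|<1$ for every $j$, the leading product tends to $0$, killing the dependence on the initial position $x_i^{0}$. The remaining weights on the targets $y^{j-1}$ telescope to $1-\prod_{j=1}^{k}(1-c_j)\to 1$, so the tail of the recurrence becomes a normalized weighted average of past targets, mirroring the role played by $p_i^{1}$ in the proof of Theorem 1.

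The hardest part will be controlling $\{y^k\}$. Both the global guide $g^k$ (selected by the crowding-based rule of Section 3.3) and every F(1) archive member live in the bounded decision space, so $\{y^k\}$ is uniformly bounded; combined with the monotone-improvement property of the external-archive update (a solution leaves F(1) only if dominated), one obtains stabilization of $g^k\to y^{\star}$ within a stationary environment. Substituting in the unrolled expression then gives $\lim_{k\to\infty}x_i^k=y^{\star}$, matching Definition 1. A subtle point is that $|L|$ is drawn from a heavy-tailed Levy distribution, so the bound $0<\lambda|L|<2$ is really a conditioning event; if some realizations violate it, the argument should be read as conditional on this event, exactly as the theorem's hypothesis stipulates, and a safe fallback is to show $\{x_i^k\}$ is Cauchy up to the bounded oscillation of $\{y^k\}$, which still yields a convergent subsequence inside the search space.

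To close the proof, I would note that the lion-king, lioness, and improved-cub branches (Equations (1), (2), and (9)--(10)) retain convergence by the argument of Theorem 1 once one verifies that the disturbance factors still satisfy $\alpha\le\text{step}$; the adaptive cold/hot start only re-initializes the analysis at environment changes, so convergence is established per stationary segment. Combining these with the Levy-mutation analysis above gives convergence of every individual produced by MF-DMOLSO under the stated hypothesis.
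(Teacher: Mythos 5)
Your core mechanism is the same as the paper's: rewrite the Levy update as the affine recurrence $x_i^{k+1}=(1-\lambda L)x_i^k+\lambda L\,y^k$, unroll it, and use $0<\lambda|L|<2\iff|1-\lambda L|<1$ to kill the dependence on the initial position. The paper does exactly this, but in the simplest possible setting: it assumes the target $G$ is fixed and treats $\lambda L$ as a constant across iterations, so the unrolled sum is an exact geometric series and collapses to $x_i^{k+1}=G+(1-\lambda L)^k[x_i^1-G]$, whence $x_i^k\to G$. You instead allow a per-iteration coefficient $c_k=\lambda|L|$ and a time-varying target $y^k$, which is more faithful to how the algorithm actually behaves, and you are right to flag that $|L|$ is a heavy-tailed random draw so the hypothesis is really a conditioning event — a point the paper silently ignores.

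Two steps in your generalization do not hold as stated, however. First, $|1-c_j|<1$ for every $j$ does \emph{not} imply $\prod_{j=1}^{k}(1-c_j)\to 0$: if $c_j\to 0$ fast enough (e.g.\ $c_j=1/j^2$) the product converges to a strictly positive limit and the initial condition never washes out. You need either a uniform bound $|1-c_j|\le\rho<1$ (equivalently $c_j$ bounded away from $0$ and $2$) or the divergence condition $\sum_j c_j=\infty$; with a Levy-distributed $|L|$ neither is automatic and must be assumed. The paper avoids this entirely by fixing $\lambda L$ as a constant, which is why its version of the limit goes through. Second, the stabilization $g^k\to y^\star$ is asserted from "monotone improvement" of the archive, but the archive update deletes crowded solutions and the global guide is re-selected by a crowding criterion each generation, so $g^k$ can keep moving along the front indefinitely; without stabilization your weighted average of the $y^{j-1}$ need not converge (only remain bounded, giving a convergent subsequence, as your fallback concedes). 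If you adopt the paper's simplifying assumptions — fixed target, fixed $\lambda L$ — your argument reduces exactly to theirs and both gaps disappear; as written, the more general claim needs these extra hypotheses made explicit.
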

\begin{proof}
	Theorem 1 proved the convergence of MOLSO, here only need to prove the convergence of Levy flight position update in the improved algorithm.
	Assuming that the global optimal is known, set $G$, the following formula can be obtained by recursion from equation (10),
	\begin{equation}
		\begin{aligned}
			\ x_i^{k+1}=&\lambda L(G-x_i^k)+x_i^k\\
			=&\lambda LG+(1-\lambda L)x_i^k\\
			=&\lambda LG+(1-\lambda L)\left(\lambda LG+\left(1-\lambda L\right){\ x}_i^{k-1}\right)\\
			=&\lambda LG+(1-\lambda L)\lambda LG+\cdots+(1-\lambda L)^{k-1}\lambda LG+(1-\lambda L)^k x_i^1\\
			=&\lambda LG\dfrac{1-\left(1-\lambda L\right)^k}{1-\left(1-\lambda L\right)}+\left(1-\lambda L\right)^kx_i^1\\
			=&G+{\left(1-\lambda L\right)^k[x}_i^1-G]  
		\end{aligned}
	\end{equation}
	When $k$ is sufficiently large, because $0<\lambda |L|<2$ , the limit of the above formula is obtained,
	\begin{equation}
		\lim_{k\to\infty}x_i^k=G
	\end{equation}
	That is, it can eventually converge to the optimal value. Globally, individuals will eventually converge to a Pareto curve. Therefore, MF-DMOLSO method converges.
\end{proof}

	\section{Experimental methods and settings}
\subsection{Benchmark test function}
To evaluate the performance of the proposed algorithm, we selected six static and two dynamic test functions across various parameter dimensions, including ZDT1-3 \cite{hu2014multiobjective}, DTLZ1-3 \cite{gong2009corrections}, G2 \cite{greeff2008solving}, and dMOP1 \cite{chi2019dynamic,liu2020survey}.

The PFs of ZDT1 and ZDT2 have distinct characteristics. ZDT1’s PF is a continuous convex set, while ZDT2’s PF is continuous but non-convex. In contrast, ZDT3's PF is piecewise discontinuous \cite{hu2014multiobjective}. The DTLZ1 problem features multiple local optima, making it challenging for the algorithm to discover the optimal Pareto set. To enhance the difficulty of global optimization, the g function of DTLZ2 was modified to Rastrigin, generating DTLZ3. DTLZ1 offers an opportunity to assess the algorithm's scalability across numerous targets, while DTLZ2 and DTLZ3 evaluate its ability to converge on a true PF \cite{gong2009corrections,deb2002scalable}. The G2 and dMOP1 functions, being dynamic two-objective functions, serve as effective tests of the algorithm’s adaptability to environmental changes.

\subsection{Evaluation indexes}
The evaluation criteria for multi-objective problems are intricate. In this study, we employed three commonly utilized performance indices to evaluate the efficacy of the proposed algorithm. Generation distance (GD) \cite{van1998evolutionary} gauges the distance between the known PF and the true PF. 
\begin{equation}
	GD({\rm PF}_{\text{known}},{\rm PF}_{\text{true}})=\dfrac{\left(\sum_{i=1}^{n}d_i^p\right)^{\dfrac{1}{p}}}{n}
\end{equation}
Where, $n$=number of points in ${\rm PF}_{known}$,\;$d_i=\min\limits_{p\in PF_{true}}\|F(x^i)-F(p)\|,x^i\in PF_{known}$.

Distribution ($\Delta$) \cite{scott1995fault} assesses the spatial distribution of optimal solution individuals within the solution set. 
\begin{equation}
	\Delta=\sqrt{\dfrac{\sum_{i=1}^{n}\left(\bar{d}-d_i\right)^2}{(n-1)}}
\end{equation}
Where, $d_i=\min_{x^j\in S,{x^i\neq x}^j}{\left(\sum_{k=1}^{M}{|f_k\left(x^i\right)-f_k(x^j)|}\right)}$

Error point Rate (ER) represents the proportion of known solutions in the Pareto frontier that fall outside the true frontier. In this paper, when the distance between an individual in the solution set and the true PF was greater than 0.01, the individual was denoted as an error point.
\begin{equation}
	ER=1-\dfrac{\left|S\cap P\right|}{\left|P\right|}
\end{equation}
\subsection{Experimental environment and parameter settings}

The experimental environment consisted of an operating system, Windows 11 (64-bit), a 12th Gen Intel(R) Core(TM) i7-12700H 2.30GHz processor, 16.0GB of RAM, and the simulation platform MATLAB R2021a.

In the static function test session, MF-DMOLSO was compared with MOPSO, NSGA-II and NSGA-III under different dimensions. Each algorithm was tested 20 times. For two-objectives functions, the external archive was set to accommodate 100 solutions, the iteration count was 400, and the population size was 120. In MOPSO, the inertia weight w=0.9, the inertia weight decay rate $w_{damp}=0.55$, the individual learning coefficient $c_1=1$ and the global learning coefficient $c_2=2$. In NSGA-II, the crossover probability was 0.9 and the mutation probability was 0.5. Cross parameter $t_1=20$, variation parameter $t_2=20$. For three-objectives functions, the parameter setting of the algorithm remain unchanged except that the population size was changed to 500 and the size of the external archive is changed to 500. In NSGA-III, the crossover and variation parameters were the same as those of NSGA-II.

In the dynamic function test session, under the same environment variable T, the iteration counts for the algorithms were 2 500, 5 000, 10 000, 15 000, and 20 000 times, respectively.

	\section{Experimental results}
\subsection{Static multi-objective optimization problems}
Table 3 presents the average values of performance indexes for different algorithms on three benchmark test functions. The performance when D=30 and  D=50 is presented in the Appendix.
\begin{table}[htbp]
	\caption{Performance comparison between MF-DMOLSO and other algorithms for 2-objective problems. Boldfaced values represent the best performance achieved.}
	\begin{tabular}{ccccc}
		\hline
		\textbf{Function(D=100)} & \textbf{Performance index} & \textbf{MOPSO} & \textbf{NSGA-II} & \textbf{MF-DMOLSO}\\
		\hline
		ZDT1 & GD & 0.0085 & 0.0845 & \textbf{5.7409E-4}\\
		 	 & $\Delta$ & 0.0080 & 0.0073 & \textbf{0.0049}\\
		 	 & ER & 100\% & 100\% & \textbf{0.9901\%}\\
		ZDT2 & GD & 0.0184 & 0.0988 & \textbf{5.5415E-4}\\
			 & $\Delta$ & 0.0077 & 0.0061 & \textbf{0.0057}\\
			 & ER & 100\% & 100\% & \textbf{25.7426\%}\\
		ZDT3 & GD & 0.0194 & 0.0649 & \textbf{8.8901E-4}\\
			 & $\Delta$ & 0.0102 & \textbf{0.0064} & 0.0070\\
		 	& ER & 100\% & 100\% & \textbf{6.9307\%}\\
		 \hline
	\end{tabular}
\end{table}

Notably, MF-DMOLSO outperformed the other algorithms in most of the indicators. Specifically, in ZDT1, 2, and 3 functions, MF-DMOLSO achieved a GD reduction of 93.2\%, 96.9\%, and 95.4\% compared to MOPSO, meanwhile 99.3\%, 99.4\% and 98.6\% compared to NSGA-II. ER was also lower by 99.0\%, 74.3\%, and 93.1\% than other algorithms. While the average Delta of MF-DMOLSO in ZDT3 was slightly higher than NSGA-II, MF-DMOLSO still outperformed the other algorithms.

In the repeated experiments, it was also found that when the variable dimension was high, MOPSO and NSGA-II had a low success rate of optimization, and the algorithm may converge to a certain local minimum point or a certain local minimum interval.

This paper also quoted the test results of MOLSO and multi-objective lion swarm optimization based on resampling (RMOLSO) proposed by Dong et al. [16] for comparison, as shown in Table 4. It can be seen that MF-DMOLSO proposed in this paper had better optimization performance. While MF-DMOLSO had a similar optimization success rate to RMOLSO, GD could also be reduced by 31.8\% and 24.7\% respectively in ZDT1 and ZDT3.
\begin{table}[htbp]
	\caption{Performance comparison between MF-DMOLSO and other lion swarm algorithms. Boldfaced values represent the best performance achieved}
	\begin{tabular}{ccccc}
		\hline
		\textbf{Function(D=30)} & \textbf{Performance index} & \textbf{MOLSO} & \textbf{RMOLSO} & \textbf{MF-DMOLSO}\\
		\hline
		ZDT1 	& success rate & 60\% & \textbf{90\%} & \textbf{90\%}\\
		& GD & 1.90E-3 & 6.98E-4 & \textbf{4.7608E-4}\\
		ZDT3 	& success rate & 50\% & \textbf{90\%} & \textbf{90\%}\\
		& GD & 8.5531E-4 & 7.2537E-4 & \textbf{5.4627E-4}\\
		\hline
	\end{tabular}
\end{table}

For three-objective test functions, this paper conducted comparative tests on MF-DMOLSO, MOPSO, NSGA-II, and NSGA-III under low dimensional parameters. Table 5 shows the average value of each metric after 20 runs of these algorithms on DTLZ1-3.
\begin{table}[htbp]
	\caption{Performance comparison between MF-DMOLSO and other algorithms for 3-objective problems. Boldfaced values represent the best performance achieved. }
	\begin{tabular}{cccccc}
		\hline
		\textbf{Function(D=10)} & \textbf{Performance index} & \textbf{MOPSO} & \textbf{NSGA-II} & \textbf{NSGA-III} & \textbf{MF-DMOLSO}\\
		\hline
		DTLZ1 & GD & 4.4065 & 1.1290 & 1.2274E-5 & \textbf{9.2529E-6}\\
		& $\Delta$ & 2.7858 & 0.8273 & 3.7142E-4 & \textbf{3.2662E-4}\\
		& ER & 100\% & 100\% & \textbf{0\%} & \textbf{0\%}\\
		DTLZ2 & GD & 7.9161E-4 & 0.0013 & 1.7382E-6 & \textbf{8.0663E-7}\\
		& $\Delta$ & 0.0265 & 0.0278 & \textbf{0.0263} & 0.0295\\
		& ER & 61\% & 89\% & \textbf{0\%} & \textbf{0\%}\\
		DTLZ3 & GD & 21.5033 & 2.2986 & 9.2278E-6 & \textbf{3.0425E-6}\\
		& $\Delta$ & 11.4416 & 0.3849 & \textbf{0.0218} & 0.0279\\
		& ER & 100\% & 100\% & \textbf{0\%} & \textbf{0\%}\\
		\hline
	\end{tabular}
\end{table}

As can be seen from Table 5, MF-DMOLSO was slightly ahead of NSGA-III in most indicators. At the same time, due to the introduction of the reference point mechanism, the optimization accuracy of MF-DMOLSO and NSGA-III algorithms was much higher than that of MOPSO and NSGA-II, indicating that the reference point mechanism plays a very important role in maintaining population diversity in high-dimensional target space. The proposed algorithm, leveraging the inherent advantages of LSO over GA algorithms and incorporating additional refinement strategies, exhibited superior performance compared to NSGA-III in multi-objective optimization problems.

In practice, for the two-objective problem, MF-DMOLSO selects and removes individuals based on the Pareto relationship and crowding distance, preferring solutions with a larger crowding distance on the Pareto non-dominated layer. This approach ensures better population diversity and convergence in two-dimensional space. However, when dealing with multiple objective functions, the crowding distance fails to accurately reflect the discreteness of solutions in the multi-dimensional space. Consequently, the obtained solutions are unevenly distributed on the non-dominated layer, leading the algorithm to converge prematurely into local optima. Theoretically, in optimization problems involving multiple objectives, the algorithm's reliance solely on crowding distance and Levy flight to escape local optima is limited. This limitation is the primary reason why the present study introduced a reference point mechanism for optimization problems beyond two dimensions. Leveraging the inherent advantages of LSO algorithm and the enhanced accuracy of population diversity assessment through the reference point mechanism, MF-DMOLSO demonstrated superior performance on DTLZ functions compared to NSGA-III and other benchmark algorithms.

\subsection{Dynamic multi-objective optimization problems}
In this paper, the optimization effect of MF-DMOLSO in dynamic function was tested. Fig.6 shows the optimization of MF-DMOLSO on dMOP1 function when the evaluation times are 10000 (T=0,1,2,3). Cells in Table 6 are the average value of the evaluation indicators.
\begin{figure}[H]
	\centering
	\includegraphics[scale=0.15]{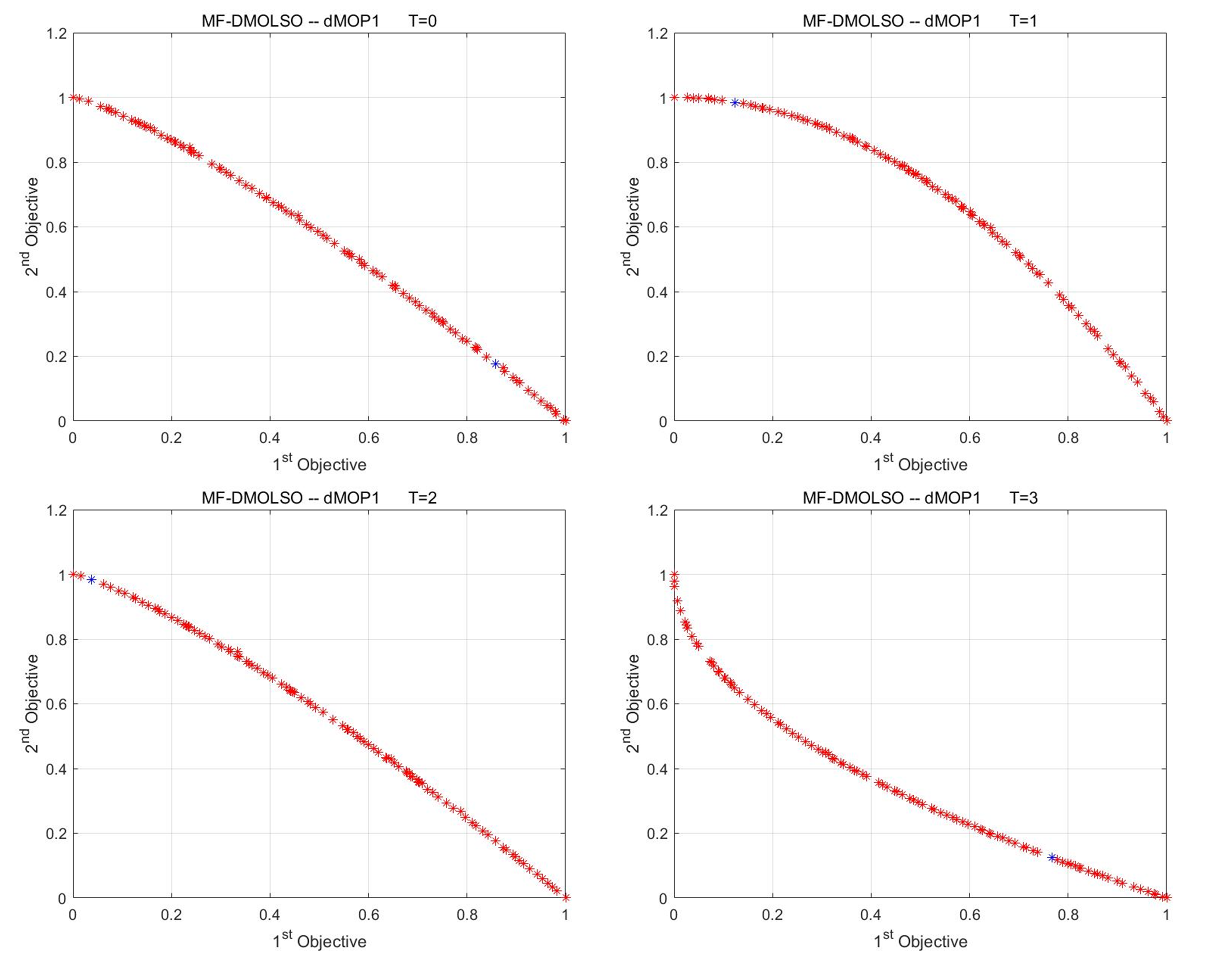}
	\caption{MF-DMOLSO – dMOP1 optimization result. From top left to bottom right, we show the optimization results of MF-DMOLSO with dMOP1’s time variable T from 0 to 3.}
\end{figure}

\begin{table}[htbp]
	\caption{Performance comparison between MF-DMOLSO and other algorithms for 3-objective problems. Boldfaced values represent the best performance achieved. }
	\begin{tabular}{cccccc}
		\hline
		\textbf{Iterations} & \textbf{Index} & \textbf{T=0} & \textbf{T=1} & \textbf{T=2} & \textbf{T=3}\\
		\hline
		2500 & GD & 2.4654E-3 & 8.9541E-4 & 6.4513E-4 & 7.8326E-4\\
		& $\Delta$ & 0.1632 & 0.0098 & 0.0081 & 0.0083\\
		& ER & 94\% & 56\% & 58\% & 62\%\\
		5000 & GD & 1.2004E-3 & 5.5354E-4 & 5.8498E-4 & 5.9868E-4\\
		& $\Delta$ & 0.0931 & 0.0431 & 0.0430  & 0.0432\\
		& ER & 46\% & 41\% & 40\% & 43\%\\
		10000 & GD & 5.7503E-5 & 5.6325E-5 & 5.6366E-5  & 5.9859E-5\\
		& $\Delta$ & 0.0061 & 0.0052 & 0.0057 & 0.0059\\
		& ER & 3\% & 2\% & 2\% & 2\%\\
		15000 & GD & 1.2331E-5 & 1.9559E-5 & 1.2331E-5 & 2.2331E-5\\
		& $\Delta$ & 0.0053 & 0.0051 & 0.0054 & 0.0055\\
		& ER & 0\% & 0\% & 0\% & 0\%\\
		20000 & GD & 1.7216E-5 & 1.7508E-5 & 1.2216E-5 & 1.7205E-5\\
		& $\Delta$ & 0.0054 & 0.0053 & 0.0054 & 0.0054\\
		& ER & 0\% & 0\% & 0\% & 0\%\\
		\hline
	\end{tabular}
\end{table}

In this paper, MF-DMOLSO was also compared with Mixture Crossover Dynamic Constrained Multi-objective Evolutionary Algorithm Based on Self-Adaptive Start-Up Strategy (MC-DCMOEA), which is the first algorithm to propose the self-adaptive cold and hot start-up strategy. It was found that on the ZDT functions when the evaluation times were more than 50 times, the accuracy and distribution of the solution obtained by the proposed algorithm were slightly better than that of MC-DCMOEA. Therefore, this paper would not make a specific comparison. However, it is necessary to emphasize that the algorithm in this paper had a shorter running time and higher optimization efficiency. When evaluated 10,000 times, MC-DCMOEA's program (provided by the original author) ran in 8.89s, while MF-DMOLSO ran in 1.06s. As can be seen, the algorithm proposed in this paper improved its efficiency by 88\% with better real-time performance and can better deal with real-time optimization scenarios of dynamic problems.

	\section{Application of MF-DMOLSO for trajectory planning of 6R robot}
In the field of industrial robotics, enhancing speed offers significant potential to improve production efficiency. Therefore, research into trajectory planning with a focus on time optimization has been extensive. Typically, manufacturers' maximum speed limits serve as constraints. While these settings prevent most safety issues, pure time orientation can lead to the maximum acceleration limits frequently, resulting in full motor loads, which will increase mechanical dither and wear \cite{cai2013robust}. Running at high acceleration is potentially dangerous for robot operators. Additionally, higher accelerations on the same path contribute to increased energy consumption, which is incompatible with energy conservation and environmental sustainability goals. This study presented Pareto-optimal solutions that optimize both time and maximum acceleration during operation to improve efficiency and make the robot's jump curve smoother and the movement more stable and safe. 

\subsection{Robot kinematics modeling and cubic-quintic-cubic polynomial trajectory planning}
In this study, the robot was required to traverse four designated spatial positions while executing specific gestures. Therefore, the trajectory planning approach was developed to ensure both efficient and safe motion. The Cartesian spatial positions and the specified postures at these four points are detailed in Table B.1 of the Appendix B. For experimental validation, the STEP SR-1400 welding robot was employed. Its standard Denavit-Hartenberg (D-H) parameter table is presented in Table B.2.

We employed a cubic-quintic-cubic three-segment polynomial for trajectory planning, ensuring continuous trajectory, velocity, and acceleration. This approach mitigates operational complexity and potential joint position overruns \cite{su2022}. The joint trajectory equation incorporates $a_{ij}$ as the $j^{th}$ parameter on the curve of segment i. The movement times of the three trajectories are\ $t_1\sim t_3$, and $p_1\sim p_3 $ represent the joint angles of these paths. Each $p$ is a vector containing six-axis angles: $p_i=[p_{i1};p_{i2};p_{i3};p_{i4};p_{i5};p_{i6}] (i=1,2,3)$. The joint angle function (22) was derived from the cubic-quintic-cubic three-segment polynomial. The first derivative of the joint angle provided the joint velocity, as per formula (23). The first derivative of the joint velocity corresponded to the joint acceleration, as detailed in the formula (24).
\begin{flalign}
	&\begin{cases}
		p_1=a_{13}t_1^3+a_{12}t_1^2+a_{11}t_1+a_{10}\\
		p_2={a_{25}t_2^5+a_{24}t_2^4+a}_{23}t_2^3+a_{22}t_2^2+a_{21}t_2+a_{20}\\
		p_3=a_{33}t_3^3+a_{32}t_3^2+a_{31}t_3+a_{30}
	\end{cases}\\
	&\begin{cases}
		{\dot{p}}_1={3a}_{13}t_1^2+{2a}_{12}t_1+a_{11}\\
		{\dot{p}}_2={5a_{25}t_2^4+{4a}_{24}t_2^3+3a}_{23}t_2^2+{2a}_{22}t_2+a_{21}\\
		{\dot{p}}_3={3a}_{33}t_3^2+{2a}_{32}t_3+a_{31}
	\end{cases}\\
	&\begin{cases}
		{\ddot{p}}_1={6a}_{13}t_1+{2a}_{12}\\
		{\ddot{p}}_2={20a_{25}t_2^3+{12a}_{24}t_2^2+6a}_{23}t_2+{2a}_{22}\\
		{\ddot{p}}_3={6a}_{33}t_3+{2a}_{32}
	\end{cases}
\end{flalign}

To ensure smooth robotic motion without jitter, the joint movement must be subject to the following constraints: the trajectory commences at the initial target point $P_1$ and terminates at the fourth target point $P_4$. The speed and acceleration at both the starting and ending points must be zero. Additionally, the position, speed, and acceleration of the intermediate path must exhibit continuous behavior within specified limits. If the maximum values for $t_1\sim t_3$ are $t_{\max1}\sim t_{\max3}$, the corresponding equations could be derived.
\begin{flalign}
	&\begin{cases}
		P_1=a_{10}\\
		P_2=a_{13}t_{\max1}^3+a_{12}t_{\max1}^2+a_{11}t_{\max1}+a_{10}\\
		P_2=a_{20}\\
		P_3={a_{25}t_{\max2}^5+a_{24}t_{\max2}^4+a}_{23}t_{\max2}^3+a_{22}t_{\max2}^2+a_{21}t_{\max2}+a_{20}\\
		P_3=a_{30}\\
		P_4=a_{33}t_{\max3}^3+a_{32}t_{\max3}^2+a_{31}t_{\max3}+a_{30}
	\end{cases}\\
	&\begin{cases}
		a_{11}=0\\
		{3a}_{13}t_{\max1}^2+{2a}_{12}t_{\max1}+a_{11}=a_{21}\\
		{5a_{25}t_{\max2}^4+{4a}_{24}t_{\max2}^3+3a}_{23}t_{\max2}^2+{2a}_{22}t_{\max2}+a_{21}=a_{31}\\
		{3a}_{33}t_{\max3}^2+{2a}_{32}t_{\max3}+a_{31}=0
	\end{cases}\\
	&\begin{cases}
		{2a}_{12}=0\\
		{6a}_{13}t_{\max1}+{2a}_{12}={2a}_{22}\\
		{20a_{25}t_{\max2}^3+{12a}_{24}t_{\max2}^2+6a}_{23}t_{\max2}+{2a}_{22}={2a}_{32}\\
		{6a}_{33}t_{\max3}+{2a}_{32}=0
	\end{cases}
\end{flalign}

The constrained equations outlined above are formulated as $Q=a\times T$, where Q represents the vector containing the angle values of waypoints $P_1\sim P_4$, a contains the motion parameters of the curve parameter $a_{ij}$, and T is the square matrix containing the motion times $t_1\sim t_3$. This formula was subsequently transformed into $a=Q\times T^{-1}$, enabling the coefficient of the trajectory equation $a_{ij}$ to be solved, which represented the optimal trajectory parameter. In this study, we opted to utilize MF-DMOLSO in the three-dimensional space of $t_{max1}\sim t_{max3}$ instead of the parameter space containing \, ${a}_{10}\sim a_{33}$. This approach mitigated the computational complexity associated with solving high-order polynomial equations, thereby enhancing both the optimization time and accuracy.

\subsection{Objective function and optimization results}
In this paper, two fitness functions were set as follows,
\begin{equation}
	\begin{cases}
		\text{fitness1}=\min{(t_1+t_2+t_3)}\\
		\text{fitness2}=\min(\max({\ddot{p}}_1,{\ddot{p}}_2,{\ddot{p}}_3))
	\end{cases}
\end{equation}
where, $t_1,t_2,t_3\in(0,3.5]$. The parameter configuration in the optimization method was akin to that of the benchmark function detailed earlier. The joint speed of the robot must not exceed 100°/s, and the joint acceleration must not exceed 60\si{\degree\per\s^2}. The population size was set to 400, and the size of the external archive was set to 200. Fig.7 exhibited all the nondominated solutions obtained in the external archive by MF-DMOLSO and MOPSO during the first 200 iteration steps.
\begin{figure}[H]
	\centering
	\includegraphics[scale=0.141]{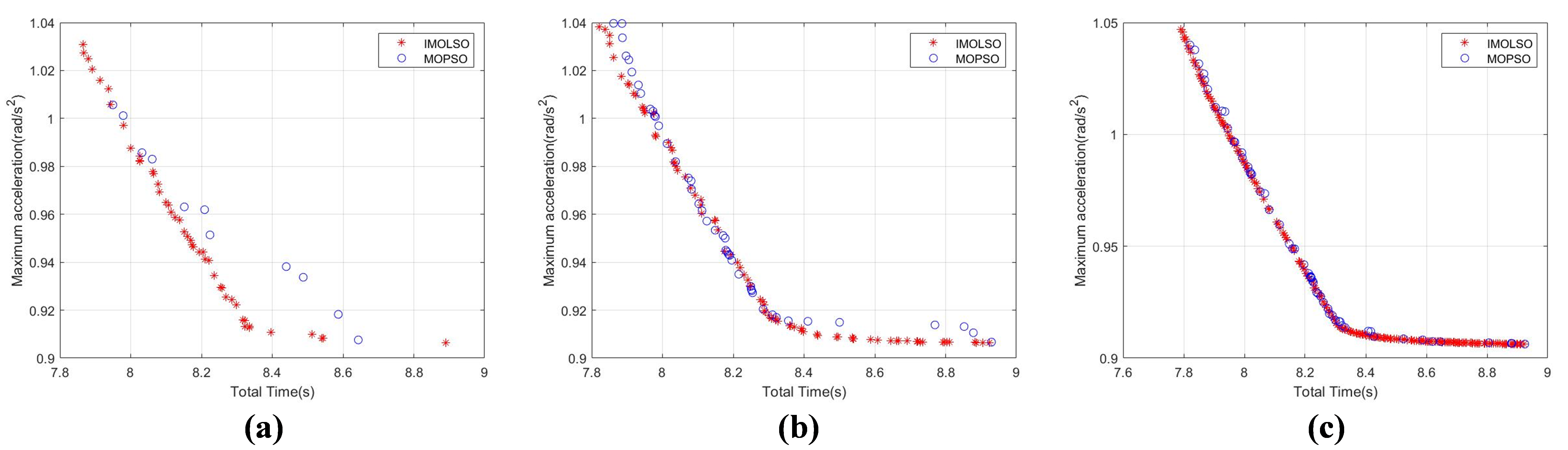}
	\caption{ Comparison of optimization results: (a), (b), and (c) represent the non-dominated solution sets obtained at the 50th, 100th, and 200th generations, respectively.}
\end{figure}

Although both algorithms ultimately identified the true PF due to the relative simplicity of the optimization problem, it was evident that MF-DMOLSO exhibited a faster convergence speed, generated a more diverse solution set, offered higher precision, and achieved a more uniform distribution of solutions. We used Set Coverage ($C$) as a metric to evaluate the dominance relationship between the Pareto solution sets obtained by the MF-DMOLSO and the contrasting algorithm.  Assuming $A$ and $B$ are two PFs, the $C$ value can be expressed as follows, 
\begin{equation}
	C\left(A,B\right)=\dfrac{\left|\left\{b\in B\middle|\exists a\in A:a\succ b\right\}\right|}{\left|B\right|}
\end{equation}
Here, $|B|$ represents the number of solutions in $B$, and $C(A,B)$ indicates the percentage of solutions in $B$ that are dominated by at least one solution in $A$. A higher value of $C(A, B)$ indicates better performance of $A$.
\begin{table}[htbp]
	\caption{Performance comparison between MF-DMOLSO and other algorithms for 3-objective problems. Boldfaced values represent the best performance achieved. }
	\begin{tabular}{ccc}
		\hline
		\textbf{Iteration} & \textbf{\textit{C(MF-DMOLSO,MOPSO)}} & \textbf{\textit{C(MOPSO,MF-DMOLSO)}}\\
		\hline
		50 & \textbf{90.90}\% & 0\%\\
		100 & \textbf{60.42}\% & 9.20\%\\
		200 & \textbf{70.97}\% & 2\%\\
		\hline
	\end{tabular}
\end{table}

The data presented above indicated that the Pareto non-dominated solution set obtained by MF-DMOLSO was able to cover a significant majority of that produced by MOPSO. MF-DMOLSO excelled in optimizing two-dimensional targets with greater efficiency and precision.

\subsection{Optimization decision based on optimal PF and trajectory curve}
Given the rapid decline in the second target value before the total time reaching 8.35s, and the subsequent inability to significantly improve the second objective despite extended timeframes, the trajectory parameter corresponding to (8.35s,0.91\si{\radian\per\s^2}) was selected. This parameter was then applied to the trajectory equation. This study employed Matlab's Robotics Toolbox for robot modeling and trajectory simulation \cite{lu2017}. 
\begin{figure}[H]
	\centering
	\includegraphics[scale=0.121]{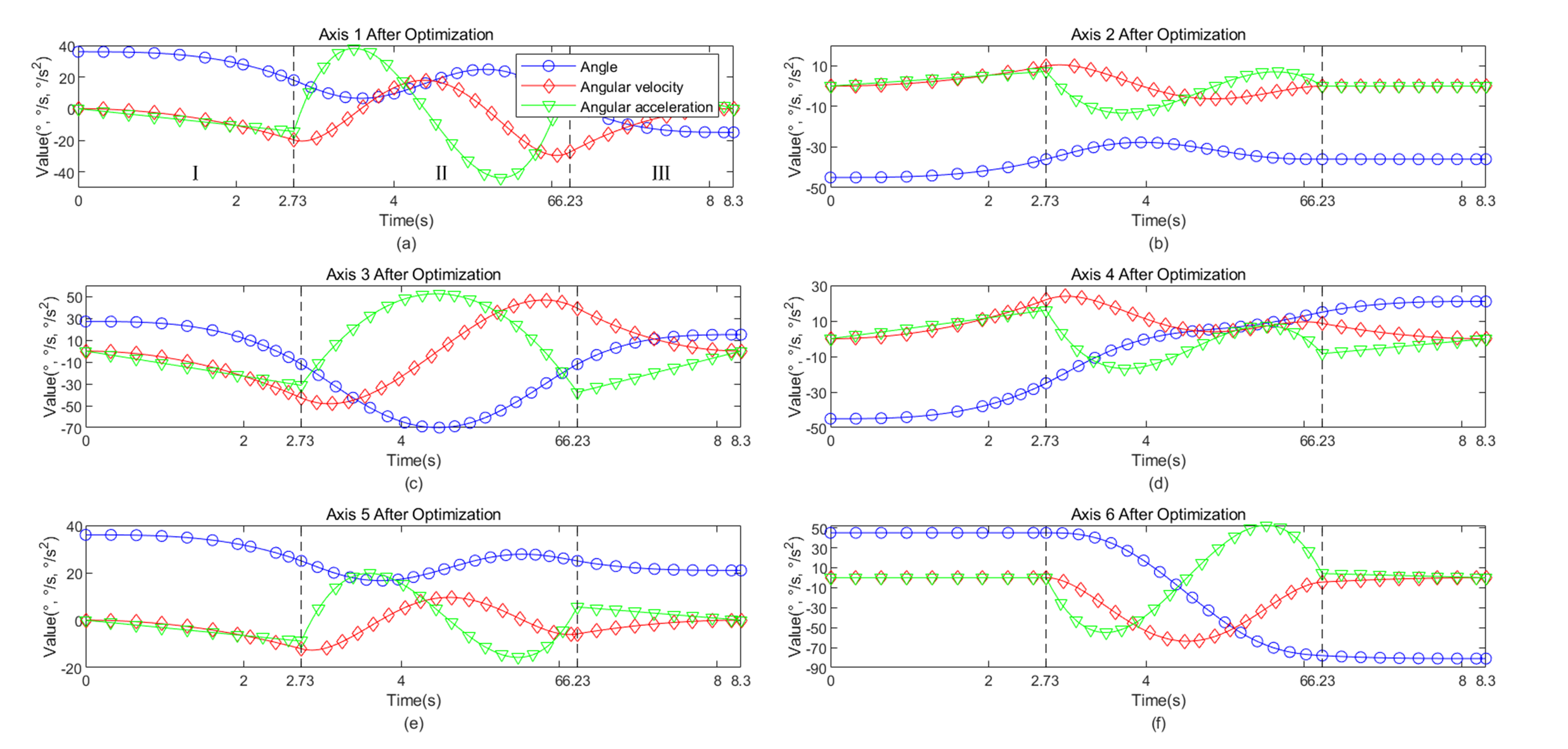}
	\caption{Simulation by Robotics Toolbox. The three trajectories represented by I-III corresponded to cubic-quintic-cubic polynomial equations. Notably, the optimized trajectory not only ensured that the robot reached the specified positions and attitudes at the four target points, but also improved production efficiency by minimizing running time and maximum acceleration.}
\end{figure}

It was evident that the optimized trajectory significantly reduced the running time to 8.3s, and the maximum acceleration was reduced to 54\si{\degree\per\s\squared}. This optimization led to a reduction in mechanical dither, and was beneficial for overall system efficiency.

	\section{Discussion}
In this study, the MF-DMOLSO algorithm exhibited a high success rate exceeding 90\%. When addressing complex high-dimensional functions or intricate Pareto boundaries, MF-DMOLSO consistently outperformed MOPSO and NSGA-II, which showed lower success rates and a tendency to converge towards pseudo-Pareto boundaries, resulting in an ER value of 100\%. Although increasing the population size and iterations marginally improved their performance, these algorithms remained inferior to MF-DMOLSO. Notably, MF-DMOLSO excelled in dynamic optimization problems, delivering superior optimization performance and efficiency. Overall, MF-DMOLSO surpassed other algorithms in approaching the true Pareto boundary, solution distribution uniformity, optimization error rate, success rate, and efficiency.

Enhancements to the lion cub behavior and step formula, along with the incorporation of Levy flight mutation, significantly increased the algorithm's flexibility. The introduction of adaptive cold and hot start strategies provided the algorithm with environmental change awareness, enabling the effective integration of historical information to generate new populations, thereby enhancing dynamic optimization efficiency.

To improve population diversity assessment, this study introduced the crowding degree metric, which varied across dimensions. For two-dimensional space, the crowding degree depended on plane distance, while in three-dimensional space, it relied on the degree of connections between individuals and predefined reference points. In two-dimensional optimization, the algorithm effectively selected and optimized individuals based on crowding distance, resulting in high population diversity and convergence. However, in higher-dimensional objectives, capturing the discreteness of solutions in multidimensional space became challenging, potentially leading to a loss of diversity or uneven distribution within the non-dominant layer. The crowding distance metric became impractical for high-dimensional target optimization, which may lead to a loss of diversity in population evolution \cite{yang2023many}.

Furthermore, in high-dimensional objectives, Pareto non-dominant ordering may not accurately reflect the true quality of solutions, as solutions close to the Pareto boundary could be classified similarly to those farther away. This reduced selection pressure could slow down the convergence rate or trap the algorithm in local optima, particularly with an increasing number of objectives. Specifically, relying solely on Pareto dominance relationships in high-dimensional optimization led to insufficient selection pressure. This inadequacy made it challenging to select a relatively better solution between two individuals with similar advantages and disadvantages. As the number of objective functions increases, the selection pressure may even disappear  \cite{wang2023performance}. Therefore, this study designed different diversity evaluation mechanisms for two-objective and multi-objective problems, effectively addressing optimization issues in various dimensional spaces and balancing solution convergence and distribution.

In two-dimensional space, the introduction of reference points was deemed unnecessary as the plane crowding distance adequately addressed two-objective optimization problems, while reference points would have increased computational costs. Nevertheless, further improvement is possible. For instance, with a high number of objectives or divisions per objective, the number of generated reference points becomes substantial, leading to increased computational demands. Further research is warranted to explore methods for improving reference point generation and optimizing its efficiency.

Based on the experimental results for static functions, MF-DMOLSO demonstrated superiority over classical MOPSO, NSGA-II, and NSGA-III in both two-objective and three-objective problems, effectively balancing the approximation and distribution indices of search. Experiments on two dynamic multi-objective optimization functions showed that MF-DMOLSO could dynamically adjust under different environments, increasing initial population diversity to enhance algorithm tracking and exhibiting excellent adaptability and convergence speed. In the context of 6R robot trajectory planning, MF-DMOLSO successfully optimized polynomial trajectory parameters, resulting in reduced running time and maximum acceleration, thereby enhancing the robot's operational smoothness and efficiency. This optimization is significant in reducing mechanical jitter and improving production efficiency and safety. In this application, MF-DMOLSO outperformed MOPSO in terms of speed and robustness, highlighting the algorithm's optimization efficiency and reliability.

	\section{Conclusion}
This study introduces the Dynamic Multi-Objective Lion Swarm Optimization with Multi-strategy Fusion (MF-DMOLSO) algorithm, addressing the limitations of LSO in dynamic multi-objective optimization, such as susceptibility to local optima and slow convergence.  MF-DMOLSO incorporates Tent chaotic mapping to enhance initial population distribution, facilitating comprehensive search space coverage and accelerating the discovery of optimal frontiers. Improvements to the position update formula for lion cubs were made, following a strategy of broad search followed by fine-tuning near the PFs, thus enhancing flexibility and search capability.

The algorithm introduces self-adaptive cold and hot start strategies to recognize environmental changes and integrate historical population data, effectively tackling dynamic issues. To maintain population diversity and prevent local optima, the concept of crowding degree was introduced, varying by dimension: planar distance in two-objective space and linkage to predefined reference points in multi-objective space. These reference points guide population evolution and ensure an even distribution in the objective space.  Additionally, the Levy flight mutation mechanism was incorporated to enhance the ability to escape local optima. By integrating crowding degree sorting and Levy mutation into position updates and global optimal solution selection, unexplored PF regions received greater optimization focus.  MF-DMOLSO significantly improved early convergence speed and late-stage global search capability, achieving high optimization success rates and uniform frontier solution distribution.

The algorithm was tested on eight benchmark functions, demonstrating superior convergence properties across performance indicators compared to MOPSO, NSGA-II, NSGA-III, MOLSO, RMOLSO, and MC-DCMOEA.  Furthermore, MF-DMOLSO was applied to the joint trajectory planning of a six-axis robot using a 3-5-3 polynomial trajectory equation. This application reduced running time to 8.3s and maximum acceleration to 54°/s², avoiding motor overload, reducing mechanical dither and wear, and ensuring smoother and safer operation. Experimental and simulation results confirmed MF-DMOLSO's efficacy and feasibility in multi-objective decision-making.

	\backmatter
	\bmhead{CRediT authorship contribution statement}
Bao Liu: Writing – original draft, Methodology, Formal analysis. Tianbao Liu: Writing – review \& editing, Formal analysis, Conceptualization. Zhongshuo Hu: Visualization, Validation. Fei Ye: Investigation, Project administration. Lei Gao: Supervision, Investigation, Validation.
\bmhead{Data availability }

Data will be made available on request.

\bmhead{Acknowledgments}

This paper is supported by the Shandong Provincial Natural Science Foundation (Grant No. ZR2021MF105).

\section*{Declaration}
\bmhead{Competing Interest}
The authors declare that they have no known competing financial interests or personal relationships that could have appeared to influence the work reported in this paper.

	\begin{appendices}
\section{Other experimental results}
The following is the supplementary material related to this article.
\renewcommand{\thetable}{A.\arabic{table}}
\setcounter{table}{0}
\begin{table}[htbp]
	\caption{Performance comparison between MF-DMOLSO and other algorithms for 2-objective problems when dimension D=10. Boldfaced values represent the best performance achieved.}
	\begin{tabular}{ccccc}
		\hline
		\textbf{Function(D=10)} & \textbf{Performance index} &\textbf{MOPSO} & \textbf{NSGA-II} & \textbf{MF-DMOLSO}\\
		\hline
		ZDT1 & GD & 9.8657E-4 & 7.8496E-4 & \textbf{5.0882E-4}\\
		& $\Delta$ & 8.2258E-3 & \textbf{0.0051} & 0.0069\\
		& ER & 0\% & 0\% & \textbf{0\%}\\
		ZDT2 & GD & 9.6966E-4 & 6.9258E-4 & \textbf{5.0959E-4}\\
		& $\Delta$ & 0.0083 & 0.0067 & \textbf{0.0059}\\
		& ER & 0\% & 0\% & \textbf{0\%}\\
		ZDT3 & GD & 0.0093 & 7.2838E-4 & \textbf{5.5155E-4}\\
		& $\Delta$ & 0.0097 & \textbf{0.0056} & 0.0061\\
		& ER & 51.4231\%& 1.6558\% & \textbf{0\%}\\
		\hline
	\end{tabular}
\end{table}

\begin{table}[htbp]
	\caption{Performance comparison between MF-DMOLSO and other algorithms for 2-objective problems when dimension D=30. Boldfaced values represent the best performance achieved.}
	\begin{tabular}{ccccc}
		\hline
		\textbf{Function(D=30)} & \textbf{Performance index} &\textbf{MOPSO} & \textbf{NSGA-II} & \textbf{MF-DMOLSO}\\
		\hline
		ZDT1 & GD & 0.0011 & 0.0086 & \textbf{4.7608E-4}\\
		& $\Delta$ & 0.0080 & 0.0067 & \textbf{0.0066}\\
		& ER & 23.2635\% & 100\% & \textbf{0\%}\\
		ZDT2 & GD & 0.0013 & 0.0074 & \textbf{4.8562E-4}\\
		& $\Delta$ & 0.0107 & 0.0059 & \textbf{0.0056}\\
		& ER & 60.3960\% & 100\% & \textbf{0\%}\\
		ZDT3 & GD & 0.0112 & 0.0103 & \textbf{5.4627E-4}\\
		& $\Delta$ & 0.0104 & \textbf{0.0060} & 0.0066\\
		& ER & 99.0099\% & 100\% & \textbf{5.6655\%}\\
		\hline
	\end{tabular}
\end{table}

\begin{table}[htbp]
	\caption{Optimization results of MF-DMOLSO in G2 problem}
	\begin{tabular}{cccccc}
		\hline
		\textbf{Iterations} & \textbf{Index} & \textbf{T=0.3} & \textbf{T=0.5} & \textbf{T=0.7}& \textbf{T=0.9}\\
		\hline
		2500 & GD & 1.2643E-3 & 8.8677E-4 & 6.8778E-4 & 7.2550E-4\\
		& $\Delta$ & 0.0309 & 0.0058 & 0.0058 & 0.0059\\
		& ER & 51\% & 22\% & 18\% & 18\%\\
		5000 & GD & 8.2004E-4 & 6.1045E-4 & 5.8498E-4 & 5.9059E-4\\
		& $\Delta$ & 0.0931 & 0.0431 & 0.0431 & 0.0431\\
		& ER & 46\% & 41\% & 40\% & 43\%\\
		10000 & GD & 5.7535E-5 & 5.3398E-5 & 5.6098E-5 & 5.7959E-5\\
		& $\Delta$ & 0.0061 & 0.0052 & 0.0057 & 0.0059\\
		& ER & 3\% & 2\% & 2\% & 2\%\\
		15000 & GD & 2.3535E-5 & 1.6936E-5 & 8.9986E-6 & 5.2331E-5\\
		& $\Delta$ & 0.0053 & 0.0051 & 0.0054 & 0.0055\\
		& ER & 0\% & 0\% & 0\% & 0\%\\
		20000 & GD & 1.0894E-5 & 1.0208E-5 & 9.6889E-6 & 1.7555E-5\\
		& $\Delta$ & 0.0054 & 0.0053 & 0.0052 & 0.0052\\
		& ER & 0\% & 0\% & 0\% & 0\%\\
		\hline
	\end{tabular}
\end{table}

\clearpage
\section{Robot modeling and trajectory information}
The following is the robot modeling and trajectory information related to this article.  
\renewcommand{\thetable}{B.\arabic{table}}
\setcounter{table}{0}
\begin{table}[htbp]
	\caption{Cartesian space pose of the target point}
	\begin{tabular}{ccc}
		\hline
		\textbf{Target point} & \textbf{Spatial position ($\mathbf{x},\mathbf{y},\mathbf{z}$)} & \textbf{Space gesture($\mathbf{R},\mathbf{P},\mathbf{V}$)}\\
		\hline
		$P_1$ & (740,487,185)& (-30,7,155)\\
		$P_2$ & (1184,366,342)& (7,27,177)\\
		$P_3$ & (1235,88,340)& (-71,23,-172)\\
		$P_4$ & (960,-244,122)& (-46,3,-173)\\
		\hline
	\end{tabular}
\end{table}

\begin{table}[htbp]
	\caption{STEP SR-1400 D-H parameters}
	\begin{tabular}{ccccc}
		\hline
		\textbf{Joint i} & \textbf{$\mathbf{a_{i-1}}$/(mm)} & \textbf{$\mathbf{\alpha_{i-1}}$/(\si{\degree})} & \textbf{$ \mathbf{d_i}$/(mm)} & \textbf{$\mathbf{\theta_i}$/(\si{\degree})}\\
		\hline
		1 &180&-90&415&$\mathbf{\theta_1}$\\
		2 &590&0&0&$\mathbf{\theta_2}$(-90)\\
		3 &115&-90&0&$\mathbf{\theta_3}$\\
		4 &0&90&625&$\mathbf{\theta_4}$\\
		5 &0&-90&0&$\mathbf{\theta_5}$\\
		6 &0&0&98&$\mathbf{\theta_6}$\\
		\hline
	\end{tabular}
\end{table}

$i$ is the number of the joint axis, $\alpha_{i-1}$ and $a_{i-1}$\ indicate the torsion angle and length of the connecting rod firmly connected with the $i^{th}$ axis, $d_i$\ is the deviation distance of the connecting rod, and $\theta_i$ is the range of the joint rotation angle. The inverse kinematics were obtained according to the D-H parameter table, and the joint angle values of the robot axes 1-6 at the four target points $P_1\sim P_4$\ are displayed in Table B.3. 

\begin{table}[htbp]
	\caption{Joint angle values of target points}
	\begin{tabular}{ccccccc}
		\hline
		&\textbf{Joint 1}(\si{\degree}) & \textbf{Joint 2}(\si{\degree})&\textbf{Joint 3}(\si{\degree})&\textbf{Joint 4}(\si{\degree})&\textbf{Joint 5}(\si{\degree})&\textbf{Joint 6}(\si{\degree})\\
		\hline
		$P_1$ &36&-45&27&-45&36&45\\
		$P_2$ &18&-36&-12&-25&25&45\\
		$P_3$ &4&-36&-12&15&25&-78\\
		$P_4$ &-15&-36&15&21&21&-81\\
		\hline
	\end{tabular}
\end{table}
\end{appendices}
	\bibliography{mynewref}

\end{document}